\documentclass[twoside]{article}
\usepackage{ecj,palatino,epsfig,latexsym,natbib}

\usepackage{amsmath}
\usepackage{amssymb}
\usepackage{amsfonts}
\usepackage{mathtools}
\usepackage{mathrsfs}
\usepackage{bm} 
\usepackage{amsthm}

\newtheorem{theorem}{Theorem}
\newtheorem{corollary}{Corollary}

\usepackage[ruled,vlined,linesnumbered]{algorithm2e}
\usepackage{graphicx}
\usepackage[font=small]{caption}
\usepackage{booktabs}
\usepackage{multirow}

\usepackage{enumitem}
\usepackage{xcolor} 

\usepackage{hyperref}

\begin{document}

\ecjHeader{x}{x}{xxx-xxx}{2026}{Advanced Optimizers From Darwinian Evolution}{D. Grimmer}
\title{\bf Direct From Darwin: Deriving Advanced Optimizers From Evolutionary First Principles}  

\author{\name{\bf Daniel Grimmer} \hfill \addr{daniel.grimmer@yale.edu}\\ 
        \addr{Philosophy, Yale University, New Haven, 06511, USA}
}

\maketitle

\begin{abstract}
Evolutionary computation has long promised to deliver both high-performance optimization tools as well as rigorous scientific simulations of Darwinian evolution. However, modern algorithms frequently abandon evolutionary fidelity for physics-inspired heuristics or superficial biological metaphors. This paper derives a suite of advanced gradient-based optimization algorithms directly from evolutionary first principles. We introduce Darwinian Lineage Simulations (DLS) to prove that, in an asexual context, Fisher's and Wright's historically opposed views of evolution are actually formally equivalent; One can partition Fisher's deterministically-evolving total population into Wright's randomly-drifting sub-populations. We prove that proper bookkeeping requires introducing a specific kind of structured noise (the DLS noise relation). Crucially, \textit{any bookkeeping choices} which satisfy this relation will yield a faithful simulation of evolution. Using this vast representational freedom, we prove that a broad family of battle-tested optimization algorithms are already perfectly compatible with evolutionary dynamics. These include: Stochastic Gradient Descent as well as many regularizations/approximations of Newton's method and Natural Gradient Descent. By simply adding DLS noise (i.e., evolutionarily faithful genetic drift), these algorithms become scientifically valid \textit{in silico} simulations of Darwinian evolution. Finally, we demonstrate that even the state-of-the-art Adam optimizer can be brought into evolutionary compliance through a minor mathematical surgery.
\end{abstract}

\begin{keywords}

Evolutionary computation,
population genetics,
gradient descent,
Adam optimizer,
Fisher-Wright debate,
optimization theory.

\end{keywords}

\section{Introduction}\label{SecIntro}\footnotetext{Code available at \url{https://github.com/danielgrimmer/adam-dls}.}
The field of Evolutionary Computation is founded upon a dual mandate. The first is engineering: to build high-performance optimization algorithms inspired by the mechanisms of Darwinian evolution. The second is scientific: to develop computational platforms that faithfully simulate evolutionary dynamics, thereby enabling controlled in silico experiments on the general principles of Darwinian adaptation. Both purposes appear in the field's founding literature and are rehearsed in its early surveys \citep{FogelReview,SchwefelReview}. Yet in practice, these two aims have not received equal attention, often resulting in algorithms that perform well but lack evolutionary fidelity \citep{MiikkulainenForrest2021}. The engineering strand has historically dominated the field's output, driven by its immediate utility for addressing complex, real-world problems. By contrast, the scientific strand has been comparatively underdeveloped, at least until recently. There is now a growing call for this extensive program of evolutionary inspiration to deliver more scientific value back to the evolutionary biologist.

Fortunately, a rising tide of research in digital evolution has begun to fulfill this scientific mandate.\footnote{See, for instance, its recent applications in studying the possible origins of life \citep{OriginsOfALife}, symbiosis \citep{DigEvoSymbiosis}, and ecology \citep{DigEvoEcology}.} Artificial life platforms such as Tierra \citep{Tierra1991} and Avida \citep{Avida2004} have demonstrated the profound value of moving beyond mathematical approximations to genuinely instantiate evolutionary processes in silico \citep{Pennock2007}. These systems have enabled landmark discoveries, such as empirically demonstrating that complex features can evolve through the accumulation of simpler ``stepping-stone'' traits \citep{Lenski2003}, and that high mutation rates drive populations to favor mutational robustness over peak fitness---a phenomenon known as the ``survival of the flattest'' \citep{Wilke2001}. Contrastingly, the engineering strand is currently facing a ``paradox of success'' \citep{ParadoxOfSuccess}: an explosion of publications introducing supposedly novel algorithms that rely on superficial biological metaphors with little to no actual algorithmic innovation. Critics, notably \cite{Sorensen2015} and the curators of the Evolutionary Computation Bestiary \citep{ECBestiary}, have sharply exposed this metaphor crisis, arguing that the field often reinvents the wheel while obscuring a lack of mathematical rigor. 

To actively avoid superficial metaphors, the optimization framework derived in this paper (\textit{Darwinian Lineage Simulations}, DLS) is firmly grounded on evolutionary first principles. Moreover, the present work deftly avoids contributing to the overabundance of false algorithmic novelty; remarkably, beyond deriving the DLS noise relation, we do not claim to invent any new optimization algorithms. Rather, we demonstrate that a broad family of battle-tested optimization algorithms are already in compliance with our evolutionarily derived update rules. These include: Stochastic Gradient Descent as well as many regularizations/approximations of Newton's method and Natural Gradient Descent (see Sec.~\ref{SecDLSNoiseRel} for details). Once equipped with a biologically faithful form of genetic drift (governed by the DLS noise relation), all of these algorithms are immediately revealed to be scientifically valid in silico simulations of Darwinian evolution. A notable outlier, however, is the state-of-the-art Adam optimizer. In order to be brought into strict evolutionary compliance, Adam's momentum term requires some minor but principled mathematical surgery. We perform this surgery in Sec.~\ref{SecRecoverAdam}.

At this point, one may be wondering: How does evolution produce gradient-based algorithms at all? Isn't Darwinian adaptation supposed to be the paradigm case of a gradient-free optimization process? This persistent association of evolutionary methods with gradient-free optimization stems from the relative dominance of the engineering strand over the scientific strand within evolutionary computation. For the most part, it is only when practitioners cannot apply their preferred tools (gradient-based methods) that they turn to Darwinian evolution as a source of gradient-free heuristics. This association, however, only reflects the history and motivations of a particular computational tradition and is, in fact, contrary to the mathematics of evolution itself. One might object that our biology is fundamentally discrete at the genomic level. Two replies are available. First, evolution is substrate-independent: the Darwinian framework applies wherever one finds inheritance, variation, and differential reproduction (or expansion, or persistence). Alien systems (e.g., neural networks) with continuous genotype spaces are able to perfectly instantiate evolutionary mechanisms \citep{Pennock2007} and can thereby serve as good analogical models for simulating our own biological evolution. 

Second, and more decisively, even granting a discrete substrate at the molecular level, the mathematics of quantitative genetics has always been continuous and gradient-based. \citeauthor{Fisher1930}'s (\citeyear{Fisher1930}) fundamental theorem of natural selection is an ascent theorem; the \cite{Price1970} equation, Eq.~\eqref{EqFisherKeyInsight}, can be seen as a product of population variance and a selection gradient;\footnote{The Price equation is also central to \citeauthor{Grafen2002}'s (\citeyear{Grafen2002}) Formal Darwinism project, which uses it to formally link the dynamics of evolution with mathematical optimization. In particular, he uses it as an ``equation of motion'' to prove that natural selection leads organisms to act as if they are solving an optimization program to maximize their biological fitness.} \citeauthor{Lande1976}'s (\citeyear{Lande1976,Lande1979}) equation, $\Delta\bar{z} = G \nabla \ln \bar{W}$, is literally gradient ascent.\footnote{As \cite{Otwinowski2020} show the pre-conditioner, $G$, in Lande's equation can be given an information geometric interpretation as a natural gradient. Moreover, they observe its similarity to Newton's method in the limit of strong selection.} \cite{Barton2013} have rightly identified that quantitative genetics can act as a mathematical bridge between biology and computation. Indeed, within evolutionary computation, Natural Evolution Strategies (NES) have historically leveraged information geometry to follow the natural gradient of expected fitness. The strict association of evolution with gradient-free methods is therefore an artifact of a particular computational tradition, not a reflection of evolution's own mathematical structure. The concept of a population stochastically climbing a fitness landscape \citep{Wright1931,Wright1932} is not something which has to be artificially imported into Darwinian evolution from optimization theory. Fitness gradients are what selection implicitly and automatically computes. 

Discussions of gradient-based evolution are not absent in the modern machine learning literature. Thus far, however, most (but not all) attempts to formalize this connection have relied on importing notions from statistical physics rather than actual evolutionary theory. As a point of contrast with our evolutionarily faithful approach, let us briefly consider the work of \cite{Whitelam2021} which shows that stochastic mutation of a neural network's weights approximates gradient descent in the presence of Gaussian noise. From an engineering perspective, this is a mathematically clean and useful result: it explains why neuroevolution performs comparably to backpropagation in empirical tests. While mathematically sound, this result is scientifically inert for the evolutionary biologist. Random mutations, $\xi\sim\mathcal{N}(0,\mu^2I)$, are proposed and then accepted ``with the Metropolis probability $\min(1,\exp(-\beta\Delta U))$, a common choice in the physics literature'' \citep[pg.1]{Whitelam2021}.\footnote{Note: We have adapted \citeauthor{Whitelam2021}'s (\citeyear{Whitelam2021}) notation to ours by relabeling their mutation rate from $\sigma$ to $\mu$ and the mutation itself from $\epsilon$ to $\xi$.} Here $\beta$ is the ``reciprocal evolutionary temperature''. These are concepts imported from statistical physics, not from population genetics. By contrast, the Darwinian Lineage Simulations described in this paper are \textit{quite literally} simulations of Darwinian evolution. It should be stressed that the results of this paper would still be significant even if our evolutionarily faithful approach were to yield the exact same update equation as some other physics-based approach; as a matter of scientific hygiene, no evolutionary biologist should use the concept of thermal noise to model genetic drift, and now they don't have to.

A more evolutionarily faithful bridge between gradient descent and evolutionary dynamics has recently been established by \cite{Kucharavy2023}. By formalizing the Gillespie-Orr Mutational Landscapes model from population genetics into a class of evolutionary algorithms (GO-EA), they demonstrate a strict limit equivalence with Stochastic Gradient Descent (SGD). Crucially, this approach successfully returns scientific value to evolutionary theory. For example, it leverages SGD dynamics to reframe ``flat minima'' in machine learning loss landscapes as analogous to the above-discussed ``survival of the flattest'' phenomena. However, while the GO-EA framework successfully grounds basic SGD in strict population genetics, it remains limited to this specific algorithm. Concretely, because the GO-EA algorithmic framework relies on random sampling and a greedy search to approximate the gradient, its updates remain effectively isotropic. By contrast, it follows directly from the Price equation, Eq.~\eqref{EqFisherKeyInsight}, that the population's variance naturally acts as an anisotropic pre-conditioner. Because their GO-EA implementation does not explicitly track or harness an evolving covariance matrix, it is structurally unable to produce a wider family of pre-conditioned gradient algorithms (such as Adam, Newton methods, or Natural Gradient Descent). To find these, one must look to broader theoretical unifications, such as the framework proposed by \cite{Frank2025}.

Indeed, as we shall discuss in Sec.~\ref{SecGenUpdate}, following \citeauthor{Frank2025}'s (\citeyear{Frank2025}) methodology exposes the seeds of gradient-based optimization inside of the Price equation (a foundational, albeit tautological, description of evolutionary change). Building on this insight, Frank derives a universal Force-Metric-Bias (FMB) law, $\Delta\theta = Mf + b + \xi$, demonstrating that everything from stochastic gradient descent and Newton's method to Bayesian updating and natural selection shares the same underlying mathematical skeleton; they differ only in how they specify the metric $M$, the force $f$, the bias $b$, and the noise, $\xi$. 

While the FMB framework provides a genuine and powerful unification, its sheer generality reveals its primary limitation: it is taxonomic rather than generative. Given one's favorite optimization algorithm, the FMB law can effectively categorize its parts (i.e., its particular choices of $M$, $f$, $b$, and $\xi$) and identify their evolutionary analogues. After completing this formal decomposition, however, the FMB law does not help us determine which evolutionary mechanisms might implement this algorithm (e.g., what fitness landscape, what mode of reproduction, what model of child mortality, etc.). Moreover, the FMB decomposition cannot tell us whether or not the optimizer under consideration can be interpreted as a faithful simulation of evolutionary dynamics (e.g., Adam cannot without surgery, see Sec.~\ref{SecRecoverAdam}). This gap is most glaring in \citeauthor{Frank2025}'s (\citeyear[pg.4, pg.16]{Frank2025}) treatment of the stochastic residual term, $\xi$, which is added in merely ``to complete the FMB law'' and ``enhance exploration'' rather than being derived from any actual evolutionary considerations of genetic drift. 

This is precisely where the Darwinian Lineage Simulation (DLS) framework intervenes. We shall derive a suite of advanced gradient-based optimization algorithms from evolutionary first principles. The evolutionary fidelity of these algorithms is grounded in their relation to \citeauthor{Fisher1930}'s (\citeyear{Fisher1930}) and \citeauthor{Wright1931}'s (\citeyear{Wright1931,Wright1932}) bitterly opposed views of evolutionary dynamics.\footnote{See \cite{Provine1986} for historical context on the Fisher-Wright debate.} While previous unifications of these views have historically relied on continuous diffusion approximations,\footnote{\cite{Kimura1964} famously provided a rigorous mathematical reconciliation of Fisher's deterministic selection and Wright's random drift by taking the continuous limit of discrete Wright-Fisher binomial sampling to derive a diffusion (Fokker-Planck) equation. By contrast, \cite{Frank2013} attempts to resolve the Fisher-Wright debate by arguing that their theories are ``incommensurable'' because Fisher was formulating abstract invariance laws while Wright was building detailed dynamical models of evolutionary change.} we show that the tension between them can be resolved into a \textit{formal equivalence} in the context of asexual reproduction. Concretely, the Darwinian Lineage Simulation framework will unite Wright's emphasis on sub-populations undergoing random genetic drift, $\xi_g\sim\mathcal{N}(0,W_g)$, with Fisher's key insight that deterministic mass selection acts upon the population's current variance, $V_g$, here understood as a pre-conditioner/learning rate. As Sec.~\ref{SecLineageVsPopulation} will discuss in detail, our method of harmonizing these two perspectives involves some careful bookkeeping regarding how the total population is carved up into small sub-populations (or, over time, lineages). As we shall prove in Sec.~\ref{SecDownSampling}, the equivalence between Fisher and Wright only holds if one enforces the DLS noise relation, $W_g=\mu^2 I- (V_{g+1} - V_g)$, which strictly couples the rate of genetic drift ($W_g$) to the mutation rate ($\mu$) and changes in the tracked population's variance ($V_{g+1} - V_g$).\footnote{Technically, this is the leading order form of the DLS noise relation. Its full form can be seen in Sec.~\ref{SecDLSNoiseRel}.} This relation is not a modeling choice; genetic drift must look exactly like this if one's bookkeeping is to be evolutionarily faithful.

Apart from enforcing the DLS noise relation, however, one's bookkeeping choices (i.e., regarding how to carve the total population up into small sub-populations) are \textit{completely free} and do not affect the simulation's evolutionary fidelity. This representational freedom stems from the fact that we are working in an asexual context; no matter how we define the sub-populations there will never be any gene flow between them. It is precisely this flexibility which allows the Darwinian Lineage Simulation framework to so effectively encompass both classical and state-of-the-art gradient-based optimization algorithms. It should be stressed that the primary contribution of this work is therefore interpretive and foundational rather than algorithmic novelty. Besides our specification of the DLS noise relation, all of the algorithms discussed in this paper are already well-known. What is new is the revelation that many of these methods already admit evolutionarily faithful derivations. Simply by adopting our first-principles model of genetic drift, they become scientifically valid simulations of Darwinian evolution \textit{in silico}. Finally, for methods that deviate from these biological first principles (most notably the Adam optimizer) we expose exactly what minor but principled mathematical surgery is required to bring them back into strict evolutionary compliance.

The remainder of this paper is organized as follows. Sec.~\ref{SecDarwinianSGA} will develop the DLS framework from evolutionary first principles by proving the asexual Fisher-Wright equivalence. Sec.~\ref{SecDLSNoiseRel} will then show that a wide range of advanced gradient-based optimization algorithms fit the DLS form once they are equipped with an evolutionarily faithful model of genetic drift, namely, the DLS noise relation. Finally, Sec.~\ref{SecRecoverAdam} will perform some minor surgery on Adam's momentum term to bring it into evolutionary compliance.

\section{Darwinian Lineage Simulations and the Asexual Fisher–Wright Equivalence}\label{SecDarwinianSGA}

This section will present Darwinian Lineage Simulations (DLS), an evolutionarily faithful framework for simulating the evolution of an asexual population with continuous genotypes, $\phi \in \mathbb{R}^N$. Our first principles derivation of the DLS framework proceeds directly from \citeauthor{Fisher1930}'s (\citeyear{Fisher1930}) Large Population Size Theory adapted to an asexual context. This theory envisions evolution as a deterministic process of mass selection operating on the (additive) genetic variance within a large (well-mixed) population.\footnote{Importantly, in an asexual context, the entire genome is inherited more-or-less intact; it is subject to mutation \textit{but not} sexual recombination. This means that complex combinations of genes can be reliably passed from one generation to the next. Hence, the distinction between additive genetic variance and total genetic variance collapses \citep{VarDecomp2013}. One of the key points of conflict between Fisher and Wright is thus resolved for asexual reproduction: Fisherian selection can now act upon co-adapted gene complexes (which Wright emphasized) as if they were simple additive units. Said differently, Fisher’s selectionist logic can now play out on Wright’s rugged fitness landscape. In sum: Fisher's restriction of selection to \textit{additive} genetic variance is automatically relaxed in our asexual context. So too is his restriction to populations which are \textit{well-mixed} (panmictic).\label{FnAdditiveVariance}} While random mutations ultimately help support the population's variance, they average out at scale leading to little or no genetic drift \citep{Provine1986}. In line with this Fisherian dynamics, Sec.~\ref{SecGenUpdate} will provide a deterministic update rule which carries the total population's genotype distribution from one generation, $P_g(\phi)$, to the next, $P_{g+1}(\phi)$, via an ever-changing multiplicative fitness function and a blind Gaussian mutation profile.

Ironically, in order to more efficiently carry out this Fisherian dynamics we shall prove that it is \textit{formally equivalent} to its historical rival: \citeauthor{Wright1931}'s (\citeyear{Wright1931,Wright1932}) Shifting Balance Theory (once again, adapted to an asexual context). As a senior animal husbandman at the U.S. Department of Agriculture, Wright claimed that the best results in cattle breeding came not from population-level selection across one massive intermingled herd, but rather from breeding many small local herds in parallel and then selectively exporting the superior bulls to intermix with the rest.\footnote{As \cite{CoyneBartonTurelli1997} note, however, Wright cannot substantiate this claim without comparing his results with a control group undergoing deterministic mass selection. (There was no such control group.)} He hypothesized that evolution could proceed through a functionally similar mechanism. In brief, his theory outlines three phases. Phase I: Small, genetically isolated sub-populations (which he calls demes) undergo random genetic drift across the fitness landscape, potentially by passing through maladaptive valleys. Phase II: Eventually, some of these demes may approach new fitness peaks at which point selection effects will pull them up to their respective summits. Phase III: The demes that reach the highest peaks become disproportionately represented in the total population, specifically by producing a surplus of individuals that differentially migrate into less-fit demes. This is called interdemic selection.

One of the strongest points of contrast between Fisher and Wright is about whether or not random genetic drift plays an important role in evolution.
While \cite{Fisher1950} acknowledged that evolution is shaped by random environmental fluctuations, they insisted that selection itself was a continuous, predictable, and directed force; if the environment creates a stable fitness gradient, any large population will deterministically climb it. \cite{Wright1931,Wright1932} argued, however, that strict determinism was an evolutionary trap; once a population reaches a local peak (or encounters a perfectly flat plateau) the selection pressure would stop indefinitely.\footnote{Wright is incorrect about this. See Sec.~\ref{SecGenUpdate} for a demonstration of how Fisherian dynamics can solve a flat fitness maze without genetic drift.} The need to cross such plateaus and valleys provides some motivation for the key feature of Wright's Phase I: random genetic drift. The desired random noise can only come from drawing a random sample or sub-population from a larger distribution, hence Wright's focus on dividing the total population up into smaller sub-populations. (The puzzle of how Wright's randomly evolving sub-populations can be formally equivalent to Fisher's deterministically evolving total population will be handled in due course, see Sec.~\ref{SecDownSampling}.)

As \cite{CoyneBartonTurelli1997} note, another of Wright's motivations for introducing this subdivided demographic structure was to secure one of the evolutionary advantages of asexual reproduction while operating within a sexual context. Here is the problem: mating between two individuals with and without a certain well-adapted gene complex will invariably break that gene complex. In order for such beneficial gene complexes to evolve (Wright argues), there must be reproductive barriers in place to protect them. This solution, however, raises several further issues \citep{CoyneBartonTurelli1997}. Firstly, Wright's theory depends upon a demographic structure that many biologists argue is rarely realized in nature, i.e., the total population being divided into small sub-populations with a strongly restricted gene flow between them.\footnote{This claim, however, has been contested in metapopulation contexts \citep{WadeGoodnight1998}.} Moreover, it has been argued that his migration mechanism is ad hoc and ecologically implausible: increases in fitness do not reliably lead to increases in emigration. Lastly, the interdemic migration which Wright suggests happens in Phase III, would face the exact same sexual recombination problem that he is trying to avoid. Namely, any highly adapted deme migrating into a less-fit area will interbreed with the locals, breaking apart the precise, co-adapted gene complexes that made it successful in the first place.

All of the above-discussed issues evaporate if one moves Wright's picture into an asexual context. In particular, the fatal flaw in Phase III is eliminated: there is no longer any sexual recombination to break down well-adapted gene complexes. In an asexual context, the entire genome is perfectly heritable, acting as a single, indivisible allele. Moreover, in an asexual context the ``migration'' aspect of Phase III is replaced by the well-documented biological phenomenon of clonal interference \citep{ClonalInterference}. Because asexual lineages cannot interbreed to share beneficial mutations, they must compete directly against one another for dominance in the total population \textit{by sheer demographic weight}. We don't have to posit that the fit demes send out migrants; they just compete straightforwardly via differential survival and reproduction. In our framework, interdemic selection is seamlessly executed via a fitness-weighted reassembly of Wright's demes (see Sec.~\ref{SecLineageVsPopulation}). As we shall prove by the end of this section, this mathematical reassembly of Wright's randomly drifting demes flawlessly reconstructs the deterministic Fisherian mass selection of the total population. The magic formula which unites these opposing perspectives is the DLS noise relation.

Having repaired Phase III, we can now address the implausibility of Wright's demographic structure. Recall \citeauthor{CoyneBartonTurelli1997}'s (\citeyear{CoyneBartonTurelli1997}) complaint about Wright's reliance on small, genetically-isolated sub-populations which are arguably ecologically unrealistic in nature. In Wright's motivating example (cattle breeding), physical barriers were needed between the demes to restrict gene flow (and hence to protect the fragile gene complexes). Because of their sexual context, Wright's demes must be \textit{demographically substantial} (i.e., physically isolated groups). In our asexual context, however, lineages cannot interbreed. As such we have automatically restricted gene flow without needing to invoke physical geographic isolation; the mode of reproduction itself provides an absolute barrier. This marks an important ontological difference between the original version of Wright's Shifting Balance Theory and our asexual rendition of it. For us, Wright's demes are merely an analytic bookkeeping device, carrying no special demographic ontology. To mark this distinction, we shall leave the term ``demes'' to Wright and proceed by calling them ``sub-populations''. \textit{Importantly, no matter how we divide the total population into these Wright-like sub-populations, we can always reassemble them to recover the Fisherian dynamics of the total population.} This representational freedom (which we will formally introduce in Sec.~\ref{SecLineageVsPopulation}) is extremely handy and will later underwrite our ability to derive a wide variety of advanced optimization algorithms from evolutionary first principles.

Sec.~\ref{SecGaussianUpdateRules} then uses a particular sub-population decomposition (i.e., a particular bookkeeping choice) to derive a computationally efficient method for tracking the genotype distribution of a particular kind of Darwinian lineage. The generational update rule from Sec.~\ref{SecGenUpdate} applies to these sub-populations carrying their genotype distribution from one generation, $p_g(\phi)$, to the next, $p_{g+1}(\phi)$, via an ever-changing multiplicative fitness function and a blind Gaussian mutation profile. Theorem~\ref{ThmGaussianLineageProp} proves that if $p_{g}(\phi)$ is Gaussian with sufficiently small variance, then $p_{g+1}(\phi)$ is also Gaussian to a good approximation. Hence, so long as the lineage's variance remains sufficiently small, one only needs to track the first and second moments of its genotype distribution. As we shall see, the resulting generational update rule for these \textit{Darwinian Lineage Simulations} (DLS) is formally equivalent to a certain kind of pre-conditioned Stochastic Gradient Ascent (SGA) up the log-fitness landscape. By repeatedly applying this DLS update rule, we can efficiently track a lineage's genotype distribution across several generations.

Eventually, however, the resulting genotype distribution becomes too unwieldy to track efficiently. Concretely, its variance eventually becomes too large to justify the small-variance assumption that underwrites the DLS update rule. Fortunately, at this point (or, indeed, at any prior point) we can simplify the simulation by choosing to track only a sub-population of this ever-growing lineage. Sec.~\ref{SecDownSampling} formalizes this down-sampling procedure and explains how it introduces genetic drift into our evolutionary simulation. Moreover, it provides an additional variance-framing freedom at each application of the DLS update rule. Importantly, when properly balanced with the DLS noise relation, our control over the variance of the tracked population at each step has \textit{no effect whatsoever} on the total population's Fisherian dynamics; it is always recoverable via an ensemble of Darwinian Lineage Simulations. Crucially, this down-sampling freedom allows us to maintain the small-variance condition and continue tracking the lineage's genotype distribution with the computationally efficient DLS update rule, see Theorem~\ref{ThmDLSUpdatePackage}. 

Moreover, our ability to arbitrarily shape the sub-population's variance (balanced by the DLS noise relation) grants the DLS framework a significant amount of flexibility. In Sec.~\ref{SecLitReview} we will prove that many state-of-the-art gradient-based optimization algorithms become certified simulations of Darwinian evolution, simply by coupling them with the DLS noise relation. This opens the door for us to perform faithful simulations of evolutionary dynamics using the extremely efficient tools of gradient-based optimization.

\subsection{Fisherian Selection and Stochastic Generational Dynamics}\label{SecGenUpdate}
Let us begin by defining a generational update rule that captures how mutations and selection pressures shape the total population's genotype distribution, $P_{g}(\phi)$, from one generation to the next, $P_{g+1}(\phi)$. Unlike many Artificial Life platforms (e.g., Tierra or Avida) where fitness is implicitly evaluated by digital organisms competing for shared computational resources, our framework relies on an explicit, user-defined multiplicative fitness function, $\mathcal{F}(\phi)$. This function aggregates the survival probability and reproductive output of an individual possessing genotype, $\phi$. Crucially, we assume that selection is frequency-independent; an organism's fitness depends only on its genotype and the environment, not on the frequency of competing strategies in the population. For example, \cite{Grimmer2026} has recently shown that the MAML++ meta-objective is a biologically plausible model of childhood mortality where every task failure results in a small chance of instant death. We will soon add stochastic elements to $\mathcal{F}(\phi)$ to capture both intra- and inter-generational variability (denoted $\eta$ and $\lambda$). For now, however, let us proceed abstractly with some generic fitness function. 

If no mutations arose between generations, then the genotype distribution of the next generation, $P_{g+1}(\phi)$, would simply be proportional to $\mathcal{F}(\phi)P_{g}(\phi)$. Importantly, mutations do occur. Whenever an individual reproduces asexually, its continuous genotype, $\phi$, is randomly modified as it is being passed on to each offspring. We model this species' mutation profile with a fixed, isotropic Gaussian distribution, $\mathcal{N}(\phi;0,\mu^2 \, I)$, centered at zero with a constant mutation rate $\mu$.\footnote{We assume a Gaussian mutation profile in line with Fisher's Geometric Model, though we acknowledge that heavy-tailed distributions (e.g., Cauchy or Lévy flights) are increasingly studied for modeling saltational biological searches.} This represents strictly \textit{blind} mutation.\footnote{As we shall discuss in Sec.~\ref{SecLitReview}, this is a critical point of biological fidelity: unlike engineering heuristics such as the Covariance Matrix Adaptation Evolution Strategy (CMA-ES) which actively adapt the mutation profile to favor historical directions of success, true Darwinian mutation does not know anything about the fitness landscape. Further comparison of DLS with CMA-ES is, unfortunately, beyond the scope of this paper.} Combining frequency-independent selection and blind mutation yields the following generational update rule:
\begin{align}\label{EqNonStochGenUpdateRule}
\nonumber
&\text{Non-Stochastic Generational Update Rule:}\\
&P_{g+1}(\phi) \propto \left(\mathcal{F}(\phi) \ P_{g}(\phi)\right) \, \ast \, \mathcal{N}(\phi;0,\mu^2 \, I).
\end{align}
The symbol $\propto$ denotes proportionality (i.e., the right-hand side is unnormalized), and $\ast$ denotes convolution. 

This update equation is Fisherian in the sense that it implements mass selection at a population-level in a deterministic way. To quantitatively characterize this dynamics, we can use the \cite{Price1970} equation to track how the population's mean genotype, $\phi_{g}=\langle\phi\rangle_{g}$, changes from one generation to the next. Given that our mutation kernel has zero mean, the change in $\phi_{g}$ is exactly:
\begin{align}\label{EqPrice}
\phi_{g+1}-\phi_{g}
&= \int \left(\frac{\mathcal{F}(\phi)}{\langle\mathcal{F}\rangle_g}-1\right) \phi \ P_g(\phi) \ \mathrm{d}\phi=\text{cov}_g(\phi,\mathcal{F}(\phi)/\langle\mathcal{F}\rangle_g).
\end{align}
That is, the change equals the covariance of genotype $\phi$ with the relative fitness, $\mathcal{F}(\phi)/\langle\mathcal{F}\rangle_g$. This change can then be rewritten in terms of the population's variance $V_g=\text{var}_g(\phi)$ as,
\begin{align}\label{EqFisherKeyInsight}
\phi_{g+1}-\phi_{g}
&= V_g \, f_g \quad\text{where}\quad f_g=V_g^{-1}\text{cov}_g(\phi,\mathcal{F}/\langle\mathcal{F}\rangle_g).
\end{align}
Namely, $f_g$ is the slope of the linear regression of the relative fitness on the genotype, $\phi$. As \cite{Frank2025} notes, in the limit where population variance becomes small, this linear regression converges to the kind of gradient vector commonly used in optimization algorithms. This derivation provides a first-principles route to what is often termed the Secondary Theorem of Natural Selection \citep{Robertson1968}. It captures a key Fisherian insight:\footnote{Recall from footnote~\ref{FnAdditiveVariance} that because we are in an asexual context, there is no distinction between the additive variance and the total variance.} A population's rate of adaptive change is directly proportional to its current amount of genetic variation.\footnote{Interpreted carefully, this is a ceteris paribus claim, or at least one thing must be held equal. Namely, it assumes the average fitness gradient, $f_g$, remains roughly constant as one scales the population's variance, $V_g$. For instance, this is not the case in our maze example. Consider a population in a long U-shaped hallway making steady upwards progress at both ends. Although the variance keeps increasing, the average fitness gradient correspondingly decreases keeping the rate of upwards progress constant. However, the following logic shows some important cases in which $f_g$ does remain roughly constant as $V_g$ increases. If $P_g(\phi)$ is a Gaussian distribution (with any variance) then it follows from \citeauthor{Lande1976}'s (\citeyear{Lande1976,Lande1979}) work in continuous quantitative genetics that $f_g=\nabla_{\langle\phi\rangle}\log\langle\mathcal{F}\rangle_g$ is the gradient of the log mean fitness. This means that the underlying fitness landscape, $\mathcal{F}(\phi)$, is first convolved with (or smeared out by) the population's genotype distribution, $P_g(\phi)$. In effect, this will wash out any high-frequency ruggedness in the fitness landscape yielding a smoothed log-fitness landscape, $\log\langle\mathcal{F}\rangle_g$. The average fitness gradient, $f_g$, will be roughly variance-independent if $V_g$ is small compared to the curvature scale of this smoothed landscape. By Stein's Lemma, we also have that $\nabla_{\langle\phi\rangle}\langle\mathcal{F}\rangle_g=\langle\nabla\mathcal{F}\rangle_g$ can be understood as smoothing the fitness gradient. This Gaussian smoothing technique forms the mathematical foundation of many modern Evolutionary Strategies \citep{wierstra2014natural,salimans2017evolution} although it is typically approached from the direction of information geometry. \label{FnCeterisParibus}} In effect, the population's variance, $V_g$, provides the ``fuel'' upon which selection acts and $f_g$ indicates the average direction of the fitness gradient. More technically, $V_g$ acts as both a pre-conditioner and a learning rate for the evolutionary dynamics. This classical result and its algorithmic interpretation will play a significant role throughout the rest of this paper.

\begin{figure}[t!]
\centering
\includegraphics[width=0.95\textwidth]{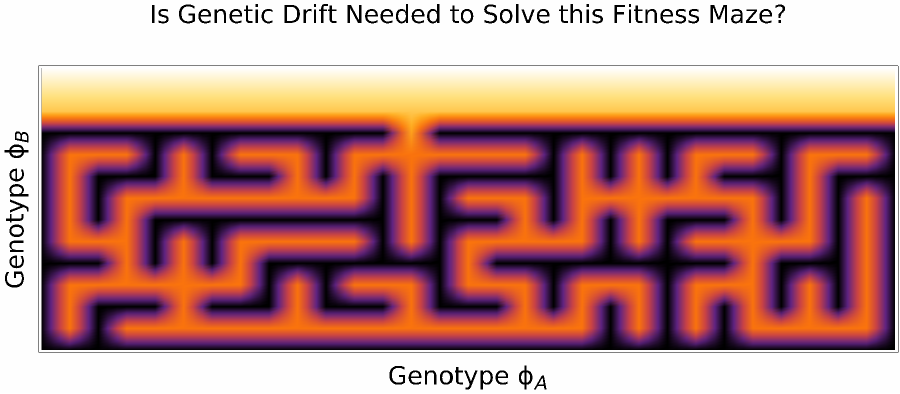}
\caption{A maze-like fitness function, $\mathcal{F}(\phi)$, is plotted in a 2D genotype space ($\phi_A, \phi_B$). The dark regions represent zero-fitness boundaries enclosing flat corridors of constant fitness. There is a single exit at the top leading to an unobstructed region of higher fitness. Suppose that the initial population is localized in a remote corner of the maze. One might wonder whether random genetic drift is required to solve this maze. A Wright-like Darwinian Lineage Simulation (see Theorem~\ref{ThmDLSUpdatePackage}) will eventually solve this maze via the stochastic wandering of genetic drift. Indeed, almost every lineage will eventually wander its way out of this maze (some faster than others). Perhaps surprisingly, Fisher's model of deterministic mass selection, Eq.~\eqref{EqNonStochGenUpdateRule}, will also eventually solve this maze \textit{without genetic drift}, see Sec.~\ref{SecGenUpdate}. It does so via a combination of strong selection at the walls/dead-ends and blind mutation pressure (which acts as a deterministic diffusion process expanding the population's variance along neutral corridors). Indeed, because of the asexual Fisher-Wright equivalence the deterministic Fisherian dynamics will solve this maze exactly as fast as the fastest Darwinian Lineage Simulation. So was the first individual to break out of this maze (and their many, many children) aided by genetic drift? From a lineage perspective, yes. From a population perspective, no.}\label{FigMaze}
\end{figure}

To illustrate the power of these deterministic Fisherian dynamics, consider the maze-like fitness environment depicted in Fig.~\ref{FigMaze}. The fitness function, $\mathcal{F}(\phi)$, is constant within the maze's corridors but drops rapidly to zero at its walls. There is a single exit at the top of the maze leading to an unobstructed region of higher fitness. Suppose that the initial population, $P_0(\phi)$, is localized in a remote corner of the maze. As Eq.~\eqref{EqNonStochGenUpdateRule} is repeatedly applied, the evolutionary dynamics proceeds deterministically; no dice are rolled. While the floor of the maze is flat there is a severe fitness differential at the boundaries. Mutations that push individuals into the walls result in immediate death. As dictated by the Price equation, Eq.~\eqref{EqPrice}, this asymmetric survival creates a covariance between genotype and fitness and thereby causes the population's mean genotype to shift. A constant outwards mutation pressure gradually spreads the genotype distribution along the corridors cut off by harsh selection at the boundaries. Eventually, this spreading population reaches the exit at a time predetermined by the population's starting location and the mutation rate. This flatly contradicts Wright's above-discussed argument that strictly deterministic dynamics cannot cross plateaus without genetic drift. 

In closing, let us next make a small improvement to the generational update rule by adding intra- and inter-generational stochasticity. Different individuals within the same generation experience different fitness pressures: even identical twins might, by chance, encounter different amounts of food and different predator exposure. Alternatively, there may be a complex genotype-to-phenotype mapping such that a single genotype can express different phenotypes depending on external factors. To capture all of this intra-generational variability, we add an index $\eta$ to the fitness function. 

Similarly, we add an index $\lambda$ to capture inter-generational variability in the fitness function. For instance, one might consider a species of birds where each generation is born into a different environment (e.g., desert, swamp, city) and so faces different predators (e.g., snakes, alligators, cars). Alternatively, consider a linguistic example: your parents' generation had to learn German, whereas your generation had to learn English. In \citeauthor{SeaScapes2009}'s (\citeyear{SeaScapes2009}) terminology we are dealing with a ``seascape'' rather than a landscape. Importantly, incorporating this stochastic $\lambda$-index into our fitness function, $\mathcal{F}_{\lambda,\eta}(\phi)$, does not violate our Fisherian credentials. Fisher explicitly built his theories around fluctuating environments, arguing that the constant ``deterioration of the environment'' is a ubiquitous driver of major adaptive shifts \citep{Fisher1950}. 

We can easily generalize Eq.~\eqref{EqNonStochGenUpdateRule} to account for both kinds of variability ($\eta$ and $\lambda$). All we have to do is use a stochastic fitness function, $\mathcal{F}_{\lambda,\eta}(\phi)$, and average over the intra-generational variability, $\eta$, as follows:
\begin{align}\label{EqStochasticGenUpdateRule}
\nonumber
&\text{Stochastic Generational Update Rule:}\\
\nonumber
&P_{g+1}(\phi) \propto \int p(\eta\vert\phi,\lambda) \ \left(\mathcal{F}_{\lambda,\eta}(\phi) \ P_{g}(\phi)\right) \, \ast \, \mathcal{N}(\phi;0,\mu^2 \, I) \ \mathrm{d} \eta,\\
&\qquad\quad \ \propto \left(\overline{\mathcal{F}_{\lambda}}(\phi) \ P_{g}(\phi)\right) \, \ast \, \mathcal{N}(\phi;0,\mu^2 \, I).
\end{align}
Comparing this rule with Eq.~\eqref{EqNonStochGenUpdateRule}, we see that it has the same form, except that the original fitness function, $\mathcal{F}(\phi)$, is replaced by the $\lambda$-indexed averaged fitness function, $\overline{\mathcal{F}_\lambda}(\phi)\coloneqq\int p(\eta\vert\phi,\lambda) \ \mathcal{F}_{\lambda,\eta}(\phi) \ \mathrm{d} \eta$. To reduce notational clutter, we now drop the overline on $\mathcal{F}_{\lambda}$ and implicitly assume that intra-generational variation has already been averaged out.

\subsection{Decomposition Into Sub-Populations: Analytical Bookkeeping}\label{SecLineageVsPopulation}

Thus far we have defined a stochastic generational update rule for the total population's genotype distribution along purely Fisherian lines. However, tracking the total population in this way is computationally difficult: directly evolving a complicated population-level distribution via Eq.~\eqref{EqStochasticGenUpdateRule} over many generations is often intractable. Fortunately, Fig.~\ref{FigGaussGauss}a illustrates how we can greatly simplify this task (without approximation) by dividing the total population (blue) into a ``super-distribution'' of sub-populations (orange). Formally, this decomposition corresponds to writing $P_g(\phi)$ as a convex combination of sub-distributions, $p_g(\phi\mid\alpha)$, indexed by $\alpha$ as follows:
\begin{align}
P_g(\phi)=\frac{1}{N_g}\int \omega_g(\alpha)\,p_g(\phi\mid\alpha)\,\mathrm{d}\alpha,
\end{align}
with $\omega_g(\alpha)\geq 0$, corresponding to the demographic weight of each sub-population (e.g., its size). Each sub-distribution, $p_g(\phi\mid\alpha)$, is normalized such that taking $N_g=\int \omega_g(\alpha)\,\mathrm{d}\alpha$ ensures that $P_g(\phi)$ is also normalized. There is an important sense in which these sub-populations are exactly like Wright's demes: there is absolutely no gene flow between them. Thus, any well-adapted gene complexes will be shielded from the dangers of sexual recombination (there simply is none). Unlike Wright's physically isolated demes, however, our decomposition of the total population into sub-populations carries no special demographic ontology. For us, this decomposition is merely an analytical bookkeeping device.

\begin{figure}[t!]
\centering
\includegraphics[width=0.45\textwidth]{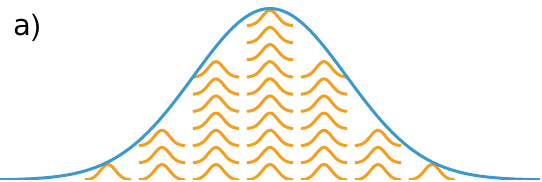}
\includegraphics[width=0.45\textwidth]{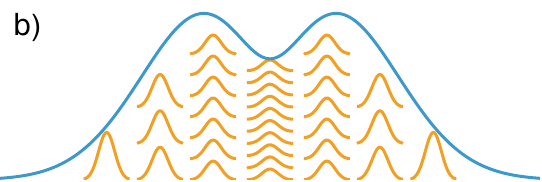}
\caption{An illustration of our ability to track the total population's evolution by independently evolving many sub-populations. a) The total population (blue) can be represented as a ``super-distribution'' of sub-populations (orange). b) Evolving each of these sub-populations (orange) independently and then reassembling them (weighted by their average fitness) exactly recovers the evolved total population (blue).}\label{FigGaussGauss}
\end{figure}

This decomposition is incredibly useful because it gives us a more efficient way to apply the above-discussed Fisherian dynamics to the total population. Importantly, up to normalization, the generational update rule, Eq.~\eqref{EqStochasticGenUpdateRule}, is linear in $P_g(\phi)$. This follows from our assumption that the fitness function, $\mathcal{F}_{\lambda}(\phi)$, is frequency-independent. It follows from this linearity that the evolution of the total-population can be exactly decomposed into the independent evolution of each of the sub-populations, see Fig.~\ref{FigGaussGauss}. Namely, one can apply the generational update rule to each sub-population and then reassemble these evolved sub-populations into the evolved total population. Formally, this means that:
\begin{align}
P_{g+1}(\phi)=\frac{1}{N_{g+1}}\int w_{g+1}(\alpha)\,p_{g+1}(\phi\mid\alpha)\,\mathrm{d}\alpha,
\end{align}
where $p_{g+1}(\phi\mid\alpha)$ is the result of applying Eq.~\eqref{EqStochasticGenUpdateRule} to $p_{g}(\phi\mid\alpha)$. Importantly, when we reassemble these sub-populations we have to use their new demographic weights,
\begin{align}
\omega_{g+1}(\alpha)=\langle \mathcal{F}_\lambda\rangle_{g,\alpha} \ \omega_g(\alpha) \quad\text{where}\quad \langle\mathcal{F}_\lambda\rangle_{g,\alpha}=\int \mathcal{F}_\lambda(\phi) \, p_g(\phi\mid\alpha)\,\mathrm{d}\phi.
\end{align}
Those sub-populations which have a higher average fitness are disproportionately represented in the total population, see Fig.~\ref{FigGaussGauss}b. 

Over time, the evolution of each sub-population gives rise to a corresponding \textit{Darwinian lineage}. By tracking an ensemble of these Darwinian lineages, one can exactly reconstruct the total population's evolutionary history as follows. The change in a sub-population's demographic weight across multiple generations is given by:
\begin{align}\label{EqLineageWeight}
\frac{\omega_{T}(\alpha)}{\omega_{t}(\alpha)}
= \prod_{g=t}^{T-1}\langle \mathcal{F}_{\lambda(g)}\rangle_{g,\alpha }
=\exp\!\left(\sum_{g=t}^{T-1}\log\langle \mathcal{F}_{\lambda(g)}\rangle_{g,\alpha}\right).
\end{align}
That is, a sub-population's demographic weight is determined by the accumulated average log-fitness of its lineage (the sub-population extended over time). Reassembling these Darwinian lineages with the appropriate weightings allows us to exactly recover every aspect of the total population's evolutionary history. For every $t$, we have:
\begin{align}
P_{t}(\phi)=\frac{1}{N_{t}}\int w_{t}(\alpha)\,p_{t}(\phi\mid\alpha)\,\mathrm{d}\alpha,
\end{align}
Hence, by simulating an ensemble of Darwinian lineages and then reassembling them one can recover the total population's Fisherian dynamics exactly (with no approximation). This result forms the conceptual backbone of the asexual Fisher-Wright equivalence which we are building up to. Importantly, this claim holds true no matter how one chooses to decompose the total population up into sub-populations (e.g., Gaussian or otherwise). This complete flexibility in how we do our bookkeeping will play a significant role in our upcoming derivation of advanced optimizers from evolutionary first principles.

\subsection{The Gaussian Lineage Propagation Theorem}\label{SecGaussianUpdateRules}
We shall next make a specific bookkeeping choice. We shall choose to decompose the total population, $P_g(\phi)$, up into Gaussian sub-populations, $p_g(\phi\mid\alpha)$. Let us focus on one of these sub-populations in particular (henceforth dropping its lineage index, $\alpha$) and investigate how it evolves under the above-discussed Fisherian dynamics. As we shall now see, when applied to a small Gaussian sub-population the generational update rule, Eq.~\eqref{EqStochasticGenUpdateRule}, takes on a simple computationally efficient form.
\begin{theorem}[Gaussian Lineage Propagation]\label{ThmGaussianLineageProp}
Suppose that $p_{g}(\phi)$ is a Gaussian distribution with mean, $\phi_{g}$, and variance, $V_{g}\succeq0$. Assume further that the scale of its variance, $V_{g}$, is small compared to the scale at which the log-fitness function, $\log(\mathcal{F}_\lambda)(\phi)$, varies in the neighborhood of $\phi_{g}$. Concretely, we require two things. Firstly, $\log(\mathcal{F}_\lambda)(\phi)$ must be approximately quadratic with Hessian, $H_g$, within a region of several standard deviations of $\phi_{g}$ (as measured by $V_g$). Secondly, we must have $V_{g}^{1/2} \ H_g \ V_{g}^{1/2}\preceq I$. Then, to a good approximation, $p_{g+1}(\phi)$ is also Gaussian with the following mean and variance:
\begin{align}
\nonumber
&\text{Full Update:}\\
\label{EqGenUpdateRulePhiFull}
&\phi_{g+1}
=\phi_{g} + (V_g^{-1} - H_g)^{-1} \, \nabla \log(\mathcal{F}_\lambda)(\phi_{g}),\\
\label{EqGenUpdateRuleSigmaFull}
&V_{g+1}= (V_{g}^{-1}- H_g)^{-1} + \mu^2 \, I. 
\end{align} 
If, moreover, $V_{g}$ is sufficiently small with respect to the Hessian such that $V_{g} \ H_g\ll I$, then:
\begin{align}
\nonumber
&\text{Leading-Order Update:}\\
\label{EqGenUpdateRulePhi}
&\phi_{g+1}
=\phi_{g} + V_g \ \nabla \log(\mathcal{F}_\lambda)(\phi_{g}),\\
\label{EqGenUpdateRuleSigma}
&V_{g+1}= V_{g} + \mu^2 \, I.
\end{align}
In the isotropic case, $V_{g}=\sigma_g^2\,I$, this further reduces to:
\begin{align}
\nonumber
&\text{Isotropic Update:}\\
\label{EqGenUpdateRulePhiIso}
&\phi_{g+1}=\phi_{g}+\sigma_g^2\nabla \log(\mathcal{F}_\lambda)(\phi_{g}),\\
\label{EqGenUpdateRuleSigmaIso}
&\sigma_{g+1}^2=\sigma_g^2+\mu^2.
\end{align}
\end{theorem}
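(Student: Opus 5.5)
The plan is to apply the stochastic generational update rule, Eq.~\eqref{EqStochasticGenUpdateRule}, directly to the Gaussian sub-population $p_g(\phi)=\mathcal{N}(\phi;\phi_g,V_g)$. The computation has two stages: selection (pointwise multiplication by $\mathcal{F}_\lambda$) and then mutation (convolution with $\mathcal{N}(0,\mu^2 I)$).

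\textbf{Selection stage.} Work with log-densities. By the first hypothesis, on a region of several standard deviations around $\phi_g$ we may write
\begin{align*}
\log\mathcal{F}_\lambda(\phi)\approx c+\nabla\log\mathcal{F}_\lambda(\phi_g)^{\top}(\phi-\phi_g)+\tfrac12(\phi-\phi_g)^{\top}H_g(\phi-\phi_g).
\end{align*}
Adding this to $\log p_g(\phi)=-\tfrac12(\phi-\phi_g)^\top V_g^{-1}(\phi-\phi_g)+\text{const}$ gives a quadratic form with precision matrix $V_g^{-1}-H_g$. Completing the square, the selected distribution is Gaussian with covariance $\Sigma=(V_g^{-1}-H_g)^{-1}$ and mean $\phi_g+\Sigma\nabla\log\mathcal{F}_\lambda(\phi_g)$. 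This is exactly Eq.~\eqref{EqGenUpdateRulePhiFull}, since a zero-mean mutation does not move the mean.

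\textbf{Mutation stage.} Convolving with $\mathcal{N}(0,\mu^2 I)$ adds covariances, which yields Eq.~\eqref{EqGenUpdateRuleSigmaFull}.

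\textbf{Main obstacle: justifying the Gaussian approximation.} Two points need care.
\begin{itemize}
\item The precision $V_g^{-1}-H_g$ must be positive definite. I would rewrite it as $V_g^{-1/2}(I-V_g^{1/2}H_gV_g^{1/2})V_g^{-1/2}$ and invoke the second hypothesis $V_g^{1/2}H_gV_g^{1/2}\preceq I$. Strict inequality is needed for invertibility, so I would treat the boundary case as degenerate or assume strictness. Singular $V_g\succeq 0$ can be handled by restricting to its support, or by a limiting argument.
\item The product $\mathcal{F}_\lambda p_g$ depends on $\log\mathcal{F}_\lambda$ outside the quadratic region only through Gaussian tails. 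I would argue that the mass of $p_g$ outside several standard deviations is exponentially small. The condition above, together with the assumption that the step $\Sigma\nabla\log\mathcal{F}_\lambda$ is not large, keeps the selected Gaussian's mass inside the trusted region, so the approximation error is controlled.
\end{itemize}
This is the step where ``to a good approximation'' has to be made precise. I would be content with a heuristic tail-mass bound rather than a sharp estimate.

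\textbf{Leading-order and isotropic cases.} When $V_gH_g\ll I$, expand
\begin{align*}
(V_g^{-1}-H_g)^{-1}=(I-V_gH_g)^{-1}V_g=V_g+V_gH_gV_g+\dots
\end{align*}
Keeping the first term gives Eqs.~\eqref{EqGenUpdateRulePhi}--\eqref{EqGenUpdateRuleSigma}. Substituting $V_g=\sigma_g^2 I$ then gives the isotropic update, Eqs.~\eqref{EqGenUpdateRulePhiIso}--\eqref{EqGenUpdateRuleSigmaIso}.
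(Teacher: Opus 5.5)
Your proposal is correct and follows essentially the same route as the paper: treat $\mathcal{F}_\lambda$ locally as a log-quadratic (Gaussian) factor, multiply it against $p_g$ and complete the square to get precision $V_g^{-1}-H_g$, add $\mu^2 I$ from the mutation convolution, and read off the leading-order rules from $(V_g^{-1}-H_g)^{-1}=(I-V_gH_g)^{-1}V_g\approx V_g$. Your added care sharpens the paper's argument rather than departing from it. The paper's footnote claims normalizability from $V_g^{1/2}H_gV_g^{1/2}\preceq I$, but as you note strict inequality is what makes $V_g^{-1}-H_g$ invertible. Your tail-mass argument also needs $V_g^{1/2}H_gV_g^{1/2}$ bounded away from $I$ (so the selected covariance is at most a bounded multiple of $V_g$), together with a moderate step; the paper leaves both implicit in its ``small variance'' premise.
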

\begin{proof}
It follows from the sub-population's variance, $V_{g}$, being small that $\mathcal{F}_\lambda(\phi)$ can be well-approximated by a Gaussian distribution within a region of several standard deviations of $\phi_{g}$ (as measured by $V_g$). Together with our assumption that $p_{g}(\phi)$ is Gaussian it then follows from the form of Eq.~\eqref{EqStochasticGenUpdateRule} that $p_{g+1}(\phi)$ is approximately Gaussian as well.\footnote{The fact that $p_{g+1}(\phi)$ is normalizable follows from $V_{g}^{1/2} \ H_g \ V_{g}^{1/2}\preceq I$. Concretely, one then has $(V_{g}^{-1}- H_g)^{-1}\succeq0$ such that $V_{g+1}\succeq0$.} It is then easy to calculate the mean and variance of $p_{g+1}(\phi)$ yielding the Full Update described above. To get to the Leading Order Update, note that $(V_{g}^{-1}- H_g)^{-1}=(I-V_g\,H_g)^{-1} \, V_g$.
\end{proof}
\noindent This theorem says that if $p_{g}(\phi)$ is Gaussian with a small enough variance, then $p_{g+1}(\phi)$ will be Gaussian as well (at least approximately). Three versions of the resulting update equation are then provided under additional small and isotropic variance assumptions. Insofar as $p_{g+1}(\phi)$ also has a small enough variance, one can apply these update rules again to find the mean and variance of the subsequent generation, $p_{g+2}(\phi)$.

It should be stressed that the dynamics described by the above theorem is still purely Fisherian. Although we have divided the total population up into genetically isolated sub-populations, these are still evolving deterministically without genetic drift. Moreover, it should be noted that the leading order update rule conforms exactly to Eq.~\eqref{EqFisherKeyInsight} and our earlier discussion of \citeauthor{Fisher1930}'s (\citeyear{Fisher1930}) key insight.\footnote{Of course, the full update rule also conforms with Eq.~\eqref{EqFisherKeyInsight} after one notes the subtlety raised in footnote~\ref{FnCeterisParibus}. Namely, the average fitness gradient $f_g$ is not fully independent of the population's variance. Indeed, by factoring $V_g$ out on the left one finds that $f_g=(I-H_g\,V_g)^{-1}\nabla \log(\mathcal{F}_\lambda)(\phi_{g})$.} Here the gradient of the log-fitness function, $\nabla \log(\mathcal{F}_\lambda)(\phi_{g})$, indicates the direction of selection. This is the direction in genotype space, $\phi\in\mathbb{R}^N$, that most quickly increases this generation's fitness function, $\mathcal{F}_\lambda(\phi)$. The lineage's current variance, $V_g$, then provides the ``fuel'' upon which this selection gradient acts. Importantly, this variational fuel can be anisotropic; selection acts more strongly in the directions in which the lineage has more variance. Correspondingly, if there is no variance in a particular direction, selection cannot act in that direction. In this way, $V_g$ acts as both a pre-conditioner and a learning rate for the evolutionary dynamics.

\begin{figure}[t!]
\centering
\includegraphics[width=0.95\textwidth]{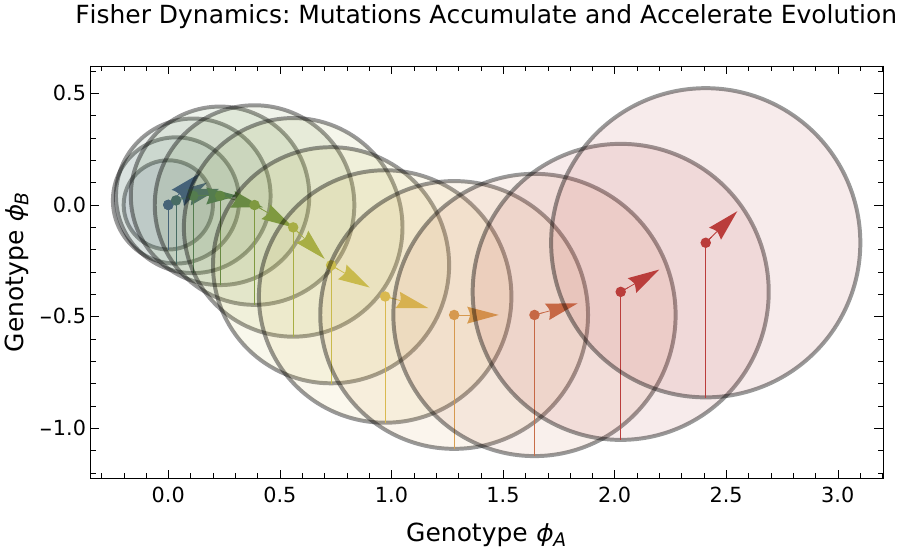}
\caption{The trajectory of a Darwinian lineage is plotted in a 2D genotype space ($\phi_A, \phi_B$) without down-sampling. Each shaded disk represents the lineage's genotype distribution, $p_g(\phi)$, at a specific generation. The point at the center of each disk is the mean genotype, $\phi_g$, while the radius represents its standard deviation, $\sigma_g$. This figure uses the isotropic update rule from Theorem~\ref{ThmGaussianLineageProp}. The arrows indicate the gradient of the log-fitness function, $\nabla\log(\mathcal{F}_{\lambda})(\phi_g)$, at each generation. Notice that even as these gradients remain the same size, the rate of genotype change, $\Delta \phi_g = \sigma_g^2\nabla\log(\mathcal{F}_{\lambda})(\phi_g)$, increases. This is because (in alignment with Fisher's key insight) the lineage's current amount of variation, $\sigma_g^2$, acts as its learning rate. This acceleration will continue at least until $\sigma_g$ becomes comparable to the curvature scale of $\log(\mathcal{F}_\lambda)$. The need to control the variance of the lineage that we track motivates the down-sampling procedures introduced in Sec.~\ref{SecDownSampling}.}\label{FigFisherDyn}
\end{figure}

To illustrate this Fisherian dynamics, let us turn to Fig.~\ref{FigFisherDyn} which shows the result of repeatedly applying Eqs.~\eqref{EqGenUpdateRulePhiIso} and~\eqref{EqGenUpdateRuleSigmaIso} to an individual so as to find the genotype distribution of their great-\dots-grandchildren. Beginning at generation $g=0$ with an individual of genotype $\phi_{0}$, their genotype distribution, $p_{0}(\phi)$, is exactly localized at $\phi_{0}$. Hence, the initial variance is zero: $\sigma^2_{0}=0$. Applying Eq.~\eqref{EqGenUpdateRulePhiIso}, the mean genotype of the next generation does not change at all, $\phi_{1}=\phi_{0}$. Nothing happens because there is no variation for selection to act upon.\footnote{Note that the Full Update also works in this way: $V_g=0$ implies $(V_{g}^{-1}- H_g)^{-1}=0$ } Applying Eq.~\eqref{EqGenUpdateRuleSigmaIso}, however, shows that the next generation does pick up some variance, $\sigma^2_{1}=\mu^2$, due to mutations. Now that there is some variance in the population, there is something for selection to act upon. At generation $g=2$, the new mean genotype is
$\phi_{2}=\phi_{1}+\mu^2\nabla \log(\mathcal{F}_{\lambda_1})(\phi_{1})$.
Moreover, mutation effects compound such that we now have $\sigma^2_{2}=2\mu^2$. In line with our discussion of Fisher's key insight, this increased variance at $g=2$ yields increased selection pressure as we move on to generation $g=3$. The new mean genotype is then
$\phi_{3}=\phi_{2}+2\mu^2\nabla \log(\mathcal{F}_{\lambda_2})(\phi_{2})$.
Moreover, mutation effects compound further such that we have $\sigma^2_{3}=3\mu^2$. 

Several things are going on here that ought to be unpacked. Note that the gradient of the log-fitness function, $\nabla\log(\mathcal{F}_{\lambda})$, is evaluated at a different location at each update, i.e., at $\phi_{1}$, then at $\phi_{2}$, and so on. Every time we take a step up the fitness landscape, we find new ground beneath our feet because we are now at a different location. Moreover, the fitness landscape itself (or rather seascape) changes at each update as its $\lambda$-index changes, i.e., it was $\lambda_1$, then $\lambda_2$, and so on. It is this inter-generational variability of the fitness landscape that gives us \textit{stochastic} gradient ascent. On average, however, any alignment in these $\lambda$-indexed log-fitness gradients will conspire to move the lineage mean genotype in the average direction of increasing log-fitness,
\begin{align}
\langle \nabla\log(\mathcal{F}_\lambda)\rangle
\coloneqq 
\smallint p(\lambda) \ \nabla\log(\mathcal{F}_\lambda) \ \mathrm{d}\lambda,
\end{align}
until it reaches an equilibrium characterized by $\nabla \langle \log(\mathcal{F}_\lambda)\rangle=0$. Importantly, however, the conditions underwriting $p(\lambda)$ might themselves shift, shattering this equilibrium.

There is of course a problem with repeatedly applying either the leading order or isotropic update rules in this way;\footnote{In the full update case, it is not a foregone conclusion that the population's variance will grow uncontrollably and thereby break Gaussianity. Indeed, without mutations the inverse variance evolves as $V_{g+1}^{-1}= V_{g}^{-1}- H_g$, matching a result from \cite{Otwinowski2020}. Near a fitness peak, $H_g$ is negative definite such that $V_g^{-1}$ increases and hence the variance decreases. Combined with the tendency of the $\mu^2I$ term to increase variance, the lineage may reach mutation-selection balance.} Namely, the sub-population's variance will continually increase until it violates the small-variance assumption which underwrites our derivation of these update rules (Theorem~\ref{ThmGaussianLineageProp}). For instance, recall our maze example from Fig.~\ref{FigMaze}. This breakdown occurs when the sub-population's variance becomes comparable to the width of the maze's corridors. Beyond this, the genotype distribution will no longer be approximately Gaussian. This is what we were hinting at above by suggesting that the genotype distribution will eventually become ``too unwieldy to continue tracking efficiently''. To address this issue, Sec.~\ref{SecDownSampling} will now develop a method of ``down-sampling'' that gives us full control over the variance of the tracked population. Put simply, at any point we can choose to continue tracking only a subset of the current population. This allows us to manually decrease $V_{g}$ so that we can continue using the update rules from Theorem~\ref{ThmGaussianLineageProp}. The cost, however, is the introduction of a certain amount of genetic drift.

\subsection{Controlling Variance with Random Down-Sampling: Genetic Drift}\label{SecDownSampling}
The gradient-based update rules derived in Theorem~\ref{ThmGaussianLineageProp} are valid only while the small-variance assumption holds, yet these same rules also enforce a monotonic increase in variance as mutations accumulate. This motivates the need to control the variance of the lineage we are tracking so that we can continue to apply the update rules in Theorem~\ref{ThmGaussianLineageProp}. We need a principled way to limit our attention to a subset of the lineage we are tracking. The simplest option is to randomly pick an individual from $p_{g}(\phi)$ and then restart the simulation by focusing only on that individual's descendants.

One issue, however, with down-sampling to an individual is that it resets the lineage's variance to zero. It may then take several generations for the lineage's variance (and hence its effective learning rate) to return to its ideal range: large enough to be fast, yet small enough to satisfy Theorem~\ref{ThmGaussianLineageProp}. Fortunately, our down-sampling procedure need not be as drastic as starting over from an individual. Indeed, as we now show, we can gain full control over the lineage's variance by exploiting a notable property of Gaussian distributions. Namely, any Gaussian distribution can be viewed as a Gaussian ``super-distribution'' of Gaussian sub-populations (see Fig.~\ref{FigGaussGauss}a). Using this fact, we can limit our attention to a random sub-population and continue tracking the lineage from there.

For the sake of illustration, suppose we have a large collection of people whose heights follow a Gaussian distribution (e.g., the blue curve in Fig.~\ref{FigGaussGauss}a) with variance $\sigma_\text{Blue}^2$. We can always partition this large population into many smaller sub-populations, each with its own Gaussian height distribution (e.g., the orange curves in Fig.~\ref{FigGaussGauss}a) and with smaller variance, $\sigma_\text{Orange}^2\leq\sigma_\text{Blue}^2$. Suppose each sub-population computes its mean height and submits that value to a collective database. Remarkably, the distribution of these means is also Gaussian, with some variance $\sigma_\text{Mean}^2$. The key point is that the variance reduction we achieve, $\sigma_\text{Blue}^2-\sigma_\text{Orange}^2$, is exactly equal to the variance of the means: $\sigma_\text{Mean}^2=\sigma_\text{Blue}^2-\sigma_\text{Orange}^2$.

The same fact holds in higher dimensions (e.g., in genotype space, $\phi\in\mathbb{R}^N$) and for anisotropic Gaussian distributions. Suppose that at generation $g$ the lineage's genotype distribution, $p_g(\phi)$, is Gaussian with mean $\phi_g$ and covariance $V_g$. After applying one of the generational update rules from Theorem~\ref{ThmGaussianLineageProp}, we obtain a new Gaussian distribution, $\tilde{p}_{g+1}(\phi)$, with mean $\tilde\phi_{g+1}$ and covariance $\tilde V_{g+1}$. The tildes indicate that down-sampling has not yet been applied. Now let $W_g\succeq0$ denote the covariance reduction which we would like to now impose at the down-sampling step. We can decompose $\tilde{p}_{g+1}(\phi)$ into many Gaussian sub-populations, each with a variance of,
\begin{align}\label{EqDownSampleTilde}
V_{g+1}=\tilde V_{g+1}-W_g,
\end{align}
and with means distributed as,
\begin{align}
\phi_{g+1}=\tilde\phi_{g+1}+\xi_g,\qquad \xi_g\sim\mathcal{N}(0,W_g).
\end{align}
Now we can simply pick one of these lower-variance sub-populations at random and continue tracking its evolution, e.g., by applying Theorem~\ref{ThmGaussianLineageProp} before down-sampling again. In particular, we can use this down-sampling technique at every step to ensure that the theorem's small-variance condition remains satisfied. This random down-sampling procedure is demonstrated in Fig.~\ref{FigDownSampling}. At each step the lineage's genotype distribution shifts and widens before we restrict our attention to a randomly selected sub-population. The resulting random offsets at each step amount to a kind of genetic drift.

\begin{figure}[t!]
\centering
\includegraphics[width=0.9\textwidth]{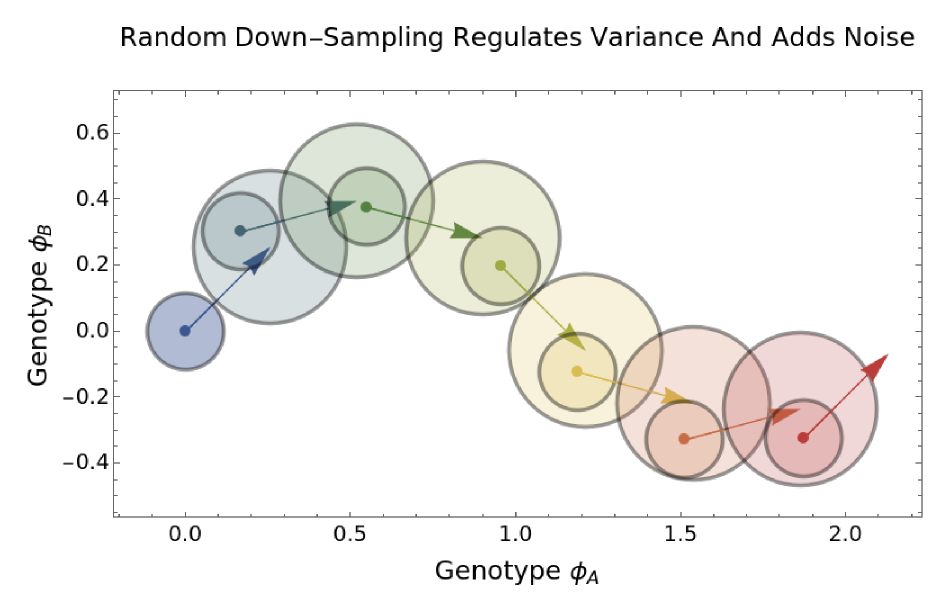}
\caption{The trajectory of a Darwinian lineage is plotted in a 2D genotype space ($\phi_A, \phi_B$) with random down-sampling. The left-most disk at $(0,0)$ represents the initial Gaussian lineage distribution, $p_0(\phi)$, at generation $g=0$. The point at the center of this disk is the mean genotype, $\phi_0$, while its radius represents the lineage's standard deviation, $\sigma_0$. This figure uses the isotropic DLS update rule from Theorem~\ref{ThmDLSUpdatePackage}. Accordingly, the arrow from $(0,0)$ shows the selection-driven mean shift, $\Delta \phi_g = \sigma_g^2\nabla\log(\mathcal{F}_{\lambda})(\phi_g)$. Prior to down-sampling, mutations increase the lineage's variance at each generation as $\tilde{\sigma}_{g+1}^2=\sigma_g^2+\mu^2$. The variance of the subsequent generation is hence represented by a larger disk, before we randomly down-sample to a Gaussian sub-population for continued tracking. This randomly selected sub-population generically has an offset mean, so a noise term $\xi_g$ must appear in the mean's update rule.}\label{FigDownSampling}
\end{figure}

We are now in a position to revise Theorem~\ref{ThmGaussianLineageProp} in light of our new down-sampling procedure. As we shall soon discuss, this amounts to a formal unification of Fisher and Wright.
\begin{theorem}[The DLS Update Rules]\label{ThmDLSUpdatePackage}
Suppose that the genotype distribution of the lineage we are tracking, $p_g(\phi)$, is Gaussian with mean $\phi_g$ and variance $V_g\succeq0$. Suppose further that the scale of its variance, $V_{g}$, is small compared to the scale at which the log-fitness function, $\log(\mathcal{F}_\lambda)(\phi)$, varies in the neighborhood of $\phi_{g}$. Concretely, we require two things.  Firstly, $\log(\mathcal{F}_\lambda)(\phi)$ must be approximately quadratic with Hessian, $H_g$, within a region of several standard deviations of $\phi_{g}$ (as measured by $V_g$). Secondly, we must have $V_{g}^{1/2} \ H_g \ V_{g}^{1/2}\preceq I$. We can apply the generational update rule from Theorem~\ref{ThmGaussianLineageProp}, and then randomly down-sample with a variance reduction of $W_g\succeq 0$. The resulting genotype distribution, $p_{g+1}(\phi)$, is Gaussian to a good approximation with the following mean and variance:
\begin{align}
\nonumber
&\text{Full DLS Update:}\\
&\phi_{g+1}
=\phi_g + (V_g^{-1}-H_g)^{-1}\,\nabla\log(\mathcal{F}_\lambda)(\phi_g) + \xi_g,
\label{EqThm2FullPhi}
\qquad \xi_g\sim\mathcal{N}(0,W_g),\\
&V_{g+1}=(V_g^{-1}-H_g)^{-1} + \mu^2I - W_g.
\label{EqThm2FullV}
\end{align}
If $V_{g}$ is sufficiently small with respect to the Hessian such that $V_{g} \ H_g\ll I$, then,
\begin{align}
\nonumber
&\text{Leading Order DLS Update:}\\
&\phi_{g+1}
=\phi_g + V_g\,\nabla\log(\mathcal{F}_\lambda)(\phi_g) + \xi_g,
\label{EqThm2LeadingPhi}
\qquad \xi_g\sim\mathcal{N}(0,W_g),\\
&V_{g+1}=V_g + \mu^2I - W_g.
\label{EqThm2LeadingV}
\end{align}
Moreover, if we want $V_g=\sigma_g^2I$ to always remain isotropic then we must take $W_g=w_g^2I$ yielding:
\begin{align}
\nonumber
&\text{Isotropic DLS Update:}\\
&\phi_{g+1}=\phi_g + \sigma_g^2\,\nabla\log(\mathcal{F}_\lambda)(\phi_g) + \xi_g,
\label{EqThm2IsoPhi}
\qquad \xi_g\sim\mathcal{N}(0,w_g^2I),\\
&\sigma_{g+1}^2=\sigma_g^2+\mu^2-w_g^2.
\label{EqThm2IsoSigma}
\end{align}
So long as $W_{g}$ and $V_{g+1}$ are both symmetric and positive semi-definite, our choice of $W_g$ at each down-sampling step is unconstrained. By controlling the lineage's variance, $V_g$, in this way, one can repeatedly apply these update rules without violating its small-variance conditions. By running an ensemble of such DLS simulations and reassembling them with the fitness-weighting described in Sec.~\ref{SecLineageVsPopulation}, one can recover the total population's Fisherian dynamics to arbitrary precision. Importantly, this recovery is independent of the specific choices of $W_g$.
\end{theorem}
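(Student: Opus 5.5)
The plan is to derive Theorem~\ref{ThmDLSUpdatePackage} by composing two results already in hand. First comes the deterministic Gaussian propagation of Theorem~\ref{ThmGaussianLineageProp}. Then comes an exact Gaussian decomposition identity that accounts for the down-sampling step. Finally, we appeal to the linearity/reassembly argument of Sec.~\ref{SecLineageVsPopulation} for the claim about recovering Fisherian dynamics.

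\emph{Step 1 (propagation).} Under the stated small-variance hypotheses, Theorem~\ref{ThmGaussianLineageProp} applies verbatim. It gives $\tilde p_{g+1}$ Gaussian (approximately) with mean $\tilde\phi_{g+1}=\phi_g+(V_g^{-1}-H_g)^{-1}\nabla\log(\mathcal{F}_\lambda)(\phi_g)$ and covariance $\tilde V_{g+1}=(V_g^{-1}-H_g)^{-1}+\mu^2I$. The condition $V_g^{1/2}H_gV_g^{1/2}\preceq I$ guarantees $\tilde V_{g+1}\succeq0$.

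\emph{Step 2 (down-sampling identity).} For any symmetric $W_g\succeq0$ with $V_{g+1}:=\tilde V_{g+1}-W_g\succeq0$, we verify the exact identity
\begin{align*}
\mathcal{N}(\phi;\tilde\phi_{g+1},\tilde V_{g+1})=\int \mathcal{N}(m;\tilde\phi_{g+1},W_g)\,\mathcal{N}(\phi;m,V_{g+1})\,\mathrm{d}m .
\end{align*}
This holds because the sum of independent Gaussians $m\sim\mathcal{N}(\tilde\phi_{g+1},W_g)$ and $\phi-m\sim\mathcal{N}(0,V_{g+1})$ is Gaussian with summed covariance; equivalently, characteristic functions multiply. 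It works for degenerate covariances too. The identity is a legitimate sub-population decomposition in the sense of Sec.~\ref{SecLineageVsPopulation}, with index $\alpha=m$ and weights $\mathcal{N}(m;\tilde\phi_{g+1},W_g)$. Selecting $m$ at random from the weight gives $\phi_{g+1}=\tilde\phi_{g+1}+\xi_g$ with $\xi_g\sim\mathcal{N}(0,W_g)$ and covariance $V_{g+1}$, which are exactly Eqs.~\eqref{EqThm2FullPhi}--\eqref{EqThm2FullV}. Positivity of $W_g$ and $V_{g+1}$ is the only constraint needed, which justifies the freedom claim.

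\emph{Step 3 (special cases).} The leading-order form follows by expanding $(V_g^{-1}-H_g)^{-1}=(I-V_gH_g)^{-1}V_g=V_g+O(V_gH_gV_g)$ when $V_gH_g\ll I$, as in the earlier proof. For the isotropic form, if $V_g=\sigma_g^2I$ then $\tilde V_{g+1}=(\sigma_g^2+\mu^2)I$. Requiring $V_{g+1}$ isotropic forces $W_g=\tilde V_{g+1}-V_{g+1}$ to be a multiple of $I$, namely $w_g^2I$.

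\emph{Step 4 (Fisherian recovery), the main obstacle.} We must argue that ensembles reassemble to $P_g$ independently of the choices of $W_g$. The plan is an induction on $g$. The total population is represented as a mixture over lineages, and the down-sampling identity of Step 2 is an exact re-decomposition, not a change of the mixture. So replacing each lineage's $\tilde p_{g+1}$ by the Monte Carlo draw from its own decomposition leaves the expected mixture unchanged. Applying the fitness reweighting $\omega_{g+1}=\langle\mathcal{F}_\lambda\rangle_{g}\,\omega_g$ and the linearity of Eq.~\eqref{EqStochasticGenUpdateRule} then shows the reweighted ensemble equals $P_{g+1}$ in expectation. Its sampling error vanishes as the ensemble grows (law of large numbers for importance-weighted samples), giving ``arbitrary precision.'' The delicate point is that the random draw must use the unweighted decomposition density while fitness enters only through the lineage weights. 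I will check carefully that no double counting of $\langle\mathcal{F}_\lambda\rangle$ occurs. The Gaussian-approximation errors from Step 1 also accumulate across generations and are controlled only insofar as the small-variance condition is maintained, which is exactly what the choice of $W_g$ permits.
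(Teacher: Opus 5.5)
Your proposal is correct and follows the paper's route. The paper's proof is a one-line ``direct assembly'' of three ingredients: Theorem~\ref{ThmGaussianLineageProp}, the Gaussian-of-Gaussians down-sampling decomposition behind Eq.~\eqref{EqDownSampleTilde}, and the fitness-weighted reassembly of Sec.~\ref{SecLineageVsPopulation}. You make explicit what the paper leaves implicit: the convolution identity, the conditional-expectation induction showing that reassembly is unbiased, and the caveat that the Gaussian-approximation error, unlike the exact reassembly, depends on keeping $V_g$ small.
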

\begin{proof}
This result is a direct assembly of the ingredients established above. 
\end{proof}
\noindent The strict mathematical coupling between the variance reduction at each step, $W_g$, and the covariance of the associated genetic drift, $\xi_g\sim\mathcal{N}(0,W_g)$, is what we term the \textit{DLS noise relation}. For instance, in the leading order case we have $W_g=\mu^2I-(V_{g+1}-V_g)$, the size and shape of genetic drift is fixed by the mutation rate, $\mu^2$, together with changes in the tracked populations variance, $V_{g+1}-V_g$. As we will discuss further in Sec.~\ref{SecLitReview}, this particular noise structure is not an ad hoc modeling choice; genetic drift must look exactly like this if one's bookkeeping is to be evolutionarily faithful.

The fact that these update equations now feature genetic drift, allows us to view an ensemble of DLS simulations as an asexual realization of \citeauthor{Wright1931}'s (\citeyear{Wright1931,Wright1932}) Shifting Balance Theory.
\begin{enumerate}
    \item Wright's Phase I postulates that small genetically isolated sub-populations undergo random genetic drift across the fitness landscape. Our asexual sub-populations have no gene flow between them and do experience genetic drift (see our discussion of the maze example below). 
    \item Wright's Phase II occurs when, through this stochastic wandering, a lineage discovers the slope of a new, higher-fitness peak. At that point, local mass selection drives the lineage rapidly up the log-fitness landscape. This corresponds to the gradient-driven term in our DLS update rules, where the lineage's variance acts as a pre-conditioner and learning rate, in line with Fisher's key insight, Eq.~\eqref{EqFisherKeyInsight}.
    \item Wright's Phase III proposes that successful sub-populations will become disproportionately represented in the total population. As we have discussed above, Wright's original migration mechanism for this phase has been replaced in our asexual context with clonal interference. The more successful sub-populations will come to dominate the total population by sheer demographic weight. In our model, this is captured by the fitness-weighted reassembly of the ensemble of all Darwinian lineages. Lineages that discover higher-fitness regions will accumulate exponentially larger reassembly weights, exactly as specified in Sec.~\ref{SecLineageVsPopulation}.
\end{enumerate}
Hence, an ensemble of Darwinian Lineage Simulations is an asexual realization of Wright's Shifting Balance Theory. The key difference is that our division of the total population into sub-populations carries no special demographic ontology; it is purely a matter of bookkeeping.

Let us next re-approach the maze-like fitness environment depicted in Fig.~\ref{FigMaze} from our newfound Wrightian perspective. As before, suppose we begin from an individual with genotype $\phi_0$ in a remote corner of the maze. We can follow one of this individual's lineages by repeatedly applying one of the update rules from Theorem~\ref{ThmDLSUpdatePackage}. Indeed, we can proceed at first without down-sampling (choosing $W_g=0$) at least until the population's growing variance begins to approach the walls of the maze. In this time, the population's mean will remain unchanged because the maze floor is effectively flat, $\nabla\log(\mathcal{F}_\lambda)(\phi)=0$, and $W_g=0$ implies $\xi_g=0$, no genetic drift. Beyond this point, however, further evolution without down-sampling would break the Gaussianity of $p_g(\phi)$. Proceeding with $W_g\neq0$, the lineage which we are tracking does experience genetic drift, $\xi_g\neq0$, and so will almost surely wander its way out of the maze eventually.

It should be stressed that from a lineage-level perspective, genetic drift is absolutely required to solve this maze. To see this, consider an alternative down-sampling procedure where instead of choosing to follow a random sub-population, you always choose to follow one of the mode sub-populations. With mode down-sampling, one still has a reduction in the lineages' variance ($W_g\neq0$) but no associated shift in its mean genotype. Because the maze's floor is effectively flat, $\nabla\log(\mathcal{F}_\lambda)(\phi)=0$, no genetic drift means no motion whatsoever and the mode down-sampled lineage will never solve the maze. More generally, this contrast between what random and mode down-sampling can achieve is a useful diagnostic tool: if lineages with mode down-sampling robustly develop some trait $X$, then a selection-only explanation is sufficient to account for this trait. By contrast, if this trait only ever appears in lineages with random down-sampling, then it is revealed that genetic drift must be playing an essential role.

But wait, recall from Sec.~\ref{SecGenUpdate} that this maze was solved \textit{without genetic drift} by Fisher's deterministic dynamics, i.e., repeated application of Eq.~\eqref{EqNonStochGenUpdateRule}. How can an ensemble of randomly drifting lineages reassemble into a perfectly deterministic total population? How can the necessity of randomness for solving the maze in the first case be reconciled with it being unnecessary in the second case? The resolution to this apparent paradox lies in the subtle mathematics of how the deterministic evolution of a whole can look stochastic while we are tracking its parts.\footnote{This line of thinking was originally developed in support of (but stands independent from) the many-worlds interpretation of quantum mechanics, see \cite{SelfLocatingUncertainty}.} In finite biological populations, genetic drift is conventionally understood as an ontological reality---an actual random sampling error which becomes only approximately deterministic for large populations. However, in our continuous analytical framework, the total population is treated as primary and is perfectly deterministic. When we apply down-sampling to track a single Wright-like lineage, we are analytically isolating one specific path through this broader deterministic expansion. An individual in this lineage may wonder: ``Why did my lineage happen to mutate in the particular way that it did?'' Note, however, that every possible pattern of mutations is realized somewhere in our (hypothetical) DLS ensemble. Hence what this individual is really asking is: ``Why am I in the lineage that mutated in this particular way rather than, e.g., that other way?'' This is merely a \textit{self-locating uncertainty}, which is perfectly compatible with the perfect deterministic evolution of the total population. Viewed holistically, no dice are needed to decide which of the two lineages happens; both lineages happened and both contain individuals asking this question. The asexual Fisher-Wright equivalence recovers determinism exactly (not approximately) for the total population. 

A quick comment is here warranted about the status of the approximations made in Theorems~\ref{ThmGaussianLineageProp} and~\ref{ThmDLSUpdatePackage} in order to maintain strict Gaussianity under the small variance condition. The sole function of these approximations is to keep the update rules computationally efficient; they play no role whatsoever in establishing the asexual Fisher-Wright equivalence. Indeed, no matter how one chooses to decompose the total population up into sub-populations (e.g., Gaussian or otherwise) the fitness-weighted reassembly described in Sec.~\ref{SecLineageVsPopulation} is exact over multiple generations. Moreover, it remains exact even as we further sub-divide the sub-populations at every generation (as in the above-described down-sampling process). If one tracks the random motion of all of these Wright-like lineages exactly (including their non-Gaussianities) and reassembles them, then the deterministic Fisherian dynamics of the total population is recovered \textit{exactly}. This is the asexual Fisher-Wright equivalence.

It should be stressed that we are free to do our bookkeeping however we like. Namely, we are free to choose the shape and size of the sub-populations that we down-sample into at each step. Provided that we keep track of them all (using the DLS noise relation) and recombine them appropriately, the ensemble-level equivalence between Fisher and Wright will be preserved. This representational freedom is the engine of the results that follow. In the next section, we show that by making specific strategic choices about how these sub-populations are shaped at each generational step (e.g., elongating them along historical gradients or scaling them to match local curvature) we can derive a wide range of advanced optimization algorithms, all in perfect alignment with the underlying evolutionary dynamics.

\section{Deriving Advanced Optimizers from Evolutionary First Principles}\label{SecLitReview}
In the previous section, we established the Darwinian Lineage Simulation (DLS) framework as an evolutionarily faithful realization of both \citeauthor{Fisher1930}'s (\citeyear{Fisher1930}) Large Population Size Theory and \citeauthor{Wright1931}'s (\citeyear{Wright1931,Wright1932}) Shifting Balance Theory (both adapted to an asexual context). We now pivot from descriptive population genetics to prescriptive algorithm design. To do so, we must briefly bridge the terminological gap between evolutionary biology and optimization theory. In evolution, populations adapt by \textit{ascending} a log-fitness gradient, $\nabla\log(\mathcal{F}_\lambda)$. In contrast, optimization algorithms typically seek to \textit{descend} a cost or loss function. Throughout this section, we will retain our biological notation (tracking the mean genotype, $\phi_g$, and its variance, $V_g$) to emphasize that our derived algorithms are fundamentally evolutionary. To recover standard optimization nomenclature, one need only recognize that descending a cost function is mathematically equivalent to ascending $\log(\mathcal{F}_\lambda)(\phi)$. It should also be stressed that the lineage's variance $V_g$ will play different roles in different contexts; in what follows it may act as a learning rate, a pre-conditioner, or an inverse damping matrix (a region of trust). 

This section will now leverage the immense bookkeeping freedom within the DLS framework to derive a suite of advanced optimization algorithms. In Sec.~\ref{SecDLSNoiseRel}, we introduce the DLS noise relation, demonstrating how pre-scheduled or adaptive changes in $V_g$ mandate that we inject a specific kind of structured noise to maintain an evolutionarily faithful model of genetic drift. Simply by adding DLS noise, we shall reveal that a broad family of battle-tested optimization algorithms are already scientifically valid simulations of Darwinian evolution. These include: Stochastic Gradient Ascent as well as many regularizations/approximations of Newton's method and Natural Gradient Ascent. In Sec.~\ref{SecRecoverAdam}, we apply this framework to the state-of-the-art Adam optimizer, performing the minor surgical modifications which are necessary to render its momentum and adaptive learning rates fully compliant with evolutionary dynamics.

\subsection{Recovering Advanced Optimizers by adding the DLS Noise Relation}\label{SecDLSNoiseRel}
Let us begin by emphasizing an alternative perspective on Theorem~\ref{ThmDLSUpdatePackage}. Rather than first choosing an amount to down-sample by, $W_g$, and then computing the subsequent variance, $V_{g+1}$, one could instead pre-specify one's desired variance schedule, $\{V_g\}$, and then infer how much genetic drift, $\xi_g\sim\mathcal{N}(0,W_g)$, is needed to maintain this schedule. 

For instance, suppose that one wants an evolutionarily faithful implementation of SGA with the classical learning rate schedule of $c/t$ for some $c>0$. By Fisher's key insight, Eq.~\eqref{EqFisherKeyInsight}, selection acts upon variance and hence we must identify this learning rate with the lineage's variance as $\sigma_t^2=c/t$. But this means that in order to slowly decrease the learning rate, the lineage that we are tracking needs to be made of smaller and smaller sub-populations at each generation. Hence, the amount that we down-sample by, $w_t^2$, needs to offset the mutation effects, $\mu^2$, as well as implement our desired variance reduction, $\sigma_{t+1}^2-\sigma_t^2$. Solving Eq.~\eqref{EqThm2IsoSigma} one can find the necessary noise schedule, $w_t^2=\mu^2+c/(t^2+t)$. Notice that in addition to a baseline level of mutation-induced noise, $\mu^2$, there is an additional term, $c/(t^2+t)$, which would be present even in the complete absence of mutations. This term represents the active, deliberate down-sampling required to shrink the lineage and force convergence, acting as an algorithmic ``cooling'' mechanism.

This same logic applies to modern deep learning heuristics, such as linear warmups, but with a strict biological constraint. Because the injected genetic drift must be non-negative, $w_g^2\geq0$, the DLS noise relation dictates that a lineage's variance cannot increase faster than the baseline mutation rate, $\sigma_{g+1}^2\leq\sigma_g^2+\mu^2$. The DLS framework enforces a kind of biological speed limit. An algorithmic ``warm up'' simply corresponds to a deliberate cessation of active down-sampling, $w_g^2=0$, allowing the natural mutation rate to expand the lineage's variance, thereby increasing the algorithm's learning rate.

One other consideration stems from the way that down-sampling works, as depicted in Fig.~\ref{FigGaussGauss}a. Recall from Eq.~\eqref{EqDownSampleTilde} that the amount that one chooses to down-sample by, $W_g$, determines how a Gaussian distribution with a variance of $\tilde{V}_{g+1}$ (after mutation and selection but before down-sampling) will be decomposed into many \textit{equally sized/shaped} Gaussian distributions each with a variance of $V_{g+1}=\tilde{V}_{g+1}-W_g$. It is only at this point that one of these sub-populations is randomly selected, determining $\xi_g$, and thereby fixing $\phi_{g+1}$. The key upshot of this is that one's choice of $W_g$ cannot depend upon $\phi_{g+1}$. Thus, the variance that ultimately results from one's down-sampling, $V_{g+1}$, must be independent of $\phi_{g+1}$ as well as all local data there: e.g., the gradient, the hessian, and the Fisher information matrix at $\phi_g$. Notice that this argument places a constraint on every $V_g$ \textit{except for} $V_0$ which the user might choose to specify directly in terms of local data at $\phi_0$. Overall, this imposes only a minor constraint upon the pre-scheduled or adaptive schemes for $\{V_g\}$ which are evolutionarily compliant. Sec.~\ref{SecRecoverAdam} will demonstrate an implementation of this constraint.

Generalizing these examples, we find that almost any noisy implementation of Stochastic Gradient Ascent (SGA) can be understood as a Darwinian Lineage Simulation (DLS), so long as it obeys the DLS noise relation.
\begin{corollary}[SGA plus the DLS Noise Relation]\label{ThmDLSNoiseRelIso}
Consider a generic implementation of SGA with a variable learning rate, $\sigma_g^2\geq0$, and variable isotropic Gaussian noise, $w_g^2\geq0$. Namely,
\begin{align}
\phi_{g+1}=\phi_g + \sigma_g^2\,\nabla\log(\mathcal{F}_\lambda)(\phi_g) + \xi_g,
\qquad \xi_g\sim\mathcal{N}(0,w_g^2I).
\end{align}
In order for this to be a biologically faithful simulation of evolution (specifically, a DLS) it must satisfy the DLS noise relation for some fixed mutation rate, $\mu^2\geq0$, 
\begin{align}\label{EqDLSNoiseRelIso}
w_g^2=\mu^2-(\sigma_{g+1}^2-\sigma_g^2).
\end{align}
Moreover, the objective function, $\log(\mathcal{F}_\lambda)$, needs to be approximately quadratic with Hessian, $H_g$, within a region of several standard deviations of $\phi_{g}$ (as measured by $\sigma_g^2$) with $\sigma_g^2 H_g\ll I$. Lastly, if $g>0$ then $\sigma_g^2$ must be independent of local data at $\phi_{g}$.
\end{corollary}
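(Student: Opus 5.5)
The plan is to read Corollary~\ref{ThmDLSNoiseRelIso} as the isotropic case of Theorem~\ref{ThmDLSUpdatePackage} run in reverse. Instead of choosing a down-sampling amount $w_g^2$ and deducing $\sigma_{g+1}^2$, we take the learning-rate schedule $\{\sigma_g^2\}$ and noise schedule $\{w_g^2\}$ of the SGA implementation as given. We then ask when there is a Darwinian lineage whose tracked Gaussian has variance $\sigma_g^2 I$ at each generation, and whose mean obeys exactly the stated SGA recursion. The proof splits into two directions: necessity of each listed condition, and sufficiency, meaning that together they make the recursion a DLS.

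\textbf{Necessity.} Suppose the recursion is a DLS. The Fisherian identification in Eq.~\eqref{EqFisherKeyInsight}, together with the Leading Order/Isotropic DLS Update, forces the coefficient of $\nabla\log(\mathcal{F}_\lambda)(\phi_g)$ to be the tracked lineage's variance. So the learning rate $\sigma_g^2$ must equal the lineage variance at generation $g$. The drift $\xi_g$ in Eq.~\eqref{EqThm2IsoPhi} is distributed as $\mathcal{N}(0,w_g^2 I)$, where $w_g^2$ is the very down-sampling reduction appearing in Eq.~\eqref{EqThm2IsoSigma}. Solving that equation for $w_g^2$ gives Eq.~\eqref{EqDLSNoiseRelIso}, with $\mu^2$ the species' fixed mutation rate. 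The requirement that $\log(\mathcal{F}_\lambda)$ be locally quadratic with $\sigma_g^2 H_g\ll I$ is the hypothesis under which Theorem~\ref{ThmDLSUpdatePackage} collapses the Full DLS Update to the leading-order and isotropic forms. Without it, the lineage mean would move by $(\sigma_g^{-2}I-H_g)^{-1}\nabla\log(\mathcal{F}_\lambda)$ rather than by the SGA step. Finally, independence of $\sigma_g^2$ from local data at $\phi_g$ for $g>0$ follows from the argument after Eq.~\eqref{EqDownSampleTilde}. The decomposition into equal-variance sub-populations, and hence $V_{g+1}$, is fixed before the random draw that determines $\phi_{g+1}$. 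So $\sigma_{g+1}^2$ cannot depend on $\phi_{g+1}$ or on the gradient or Hessian evaluated there. Only $\sigma_0^2$ escapes this constraint.

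\textbf{Sufficiency.} Conversely, given schedules satisfying these conditions, we construct the lineage explicitly. Start from $p_0=\mathcal{N}(\phi_0,\sigma_0^2 I)$ and, at each generation, apply Theorem~\ref{ThmGaussianLineageProp} in its isotropic form. Then down-sample with $W_g=w_g^2 I$; this is admissible because $w_g^2\ge0$ by assumption and $\sigma_{g+1}^2\ge0$. Theorem~\ref{ThmDLSUpdatePackage} then certifies that the resulting mean and variance reproduce the SGA recursion. It also certifies that fitness-weighted reassembly of the ensemble recovers the Fisherian total population, independently of the particular $w_g$ chosen.

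\textbf{Main obstacle.} The delicate step is the data-independence condition, because it is causal or measurability-based rather than algebraic. I would formalize it by requiring $\sigma_{g+1}^2$ to be measurable with respect to the history up to $\tilde\phi_{g+1}$ (before the draw of $\xi_g$). I would then observe that the Gaussian super-/sub-population decomposition is well defined only under this requirement. Everything else is direct substitution into Theorem~\ref{ThmDLSUpdatePackage}.
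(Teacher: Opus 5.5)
Your proposal is correct and follows the paper's own route. The paper proves the corollary as a restatement of the isotropic case of Theorem~\ref{ThmDLSUpdatePackage}: it solves Eq.~\eqref{EqThm2IsoSigma} for $w_g^2$ and inherits that theorem's small-variance hypotheses, together with the preceding discussion showing that $V_{g+1}$ is fixed before the random draw that determines $\phi_{g+1}$; your necessity/sufficiency split and your measurability formalization of data-independence spell out that same argument more explicitly rather than taking a different approach.
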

\begin{proof}
This is just a restatement of part of Theorem~\ref{ThmDLSUpdatePackage} and the above-discussion.
\end{proof}
\noindent The DLS noise relation can be understood as saying that there is a baseline noise level of $\mu^2$ which increases if the learning rate is decreasing, and decreases if the learning rate is increasing. In effect, hitting the brakes causes noise whereas accelerating decreases noise (up to the maximum possible acceleration, with no noise).

It is instructive to compare this result with recent literature, beginning with \cite{Whitelam2021} who previously connected a certain evolutionary algorithm with SGD. Recall from Sec.~\ref{SecIntro} that they invoke a ``reciprocal evolutionary temperature'', $\beta$, and accept proposed mutations ``with the Metropolis probability $\min(1,\exp(-\beta\Delta U))$, a common choice in the physics literature''. Ultimately, they find a noise level of $\mu^2$ and a learning rate of $\beta\mu^2$. This prior work relies heavily on concepts imported from statistical physics. For them, genetic drift is conceptually identified with thermal noise. By contrast, for us genetic drift comes from drawing a random sample out of a larger distribution, the variances of which are tied to the learning rate by Fisher's key insight, Eq.~\eqref{EqFisherKeyInsight}.

While \citeauthor{Whitelam2021}'s (\citeyear{Whitelam2021}) result mirrors our own in the special case of a constant learning rate ($\sigma_g^2=\text{const.}$), modern optimization techniques near ubiquitously require a decaying learning rate. Under a thermal noise analogy, a decaying learning rate would require the ``temperature'' of the evolutionary environment to magically cool down over time, or else for the species mutation rate to mysteriously decrease on a fixed schedule. Either of these would be ad hoc modifications forced on the biological model to bring it into line with engineering practice. Evolutionary biologists should not have to twist themselves into such thermodynamic knots in order to model genetic drift, and now they don't have to. The DLS framework resolves this gracefully. A decaying learning rate simply reflects the modeler's choice to track a progressively narrower biological lineage via down-sampling. As the sub-population shrinks, Corollary~\ref{ThmDLSNoiseRelIso} dictates the exact injection of genetic drift required to maintain biological fidelity. 

Even when the prior literature properly grounds itself in population genetics \citep{Kucharavy2023}, its results stop at recovering first-order, isotropic SGA with a constant learning rate. Such methods are incapable of passing the Rosenbrock benchmark, see Fig.~\ref{FigRosenbrock}. Understood as a log-fitness landscape, the Rosenbrock test function is a curving ridge with an extreme drop in fitness to either side which leads up to a fitness peak. With a constant isotropic variance, $V_g=\sigma^2 I$, maintaining evolutionary fidelity ($\sigma^2 H_g\ll I$), requires one to take very small steps due to the narrowness of the ridge. Small steps are computationally inefficient, and may result in a mutation-dominated random walk, see the left panel of Fig.~\ref{FigRosenbrock}. In sum: prior results in gradient-based evolution cannot pass this benchmark efficiently. 

\begin{figure}[t!]
\centering
\includegraphics[width=0.95\textwidth]{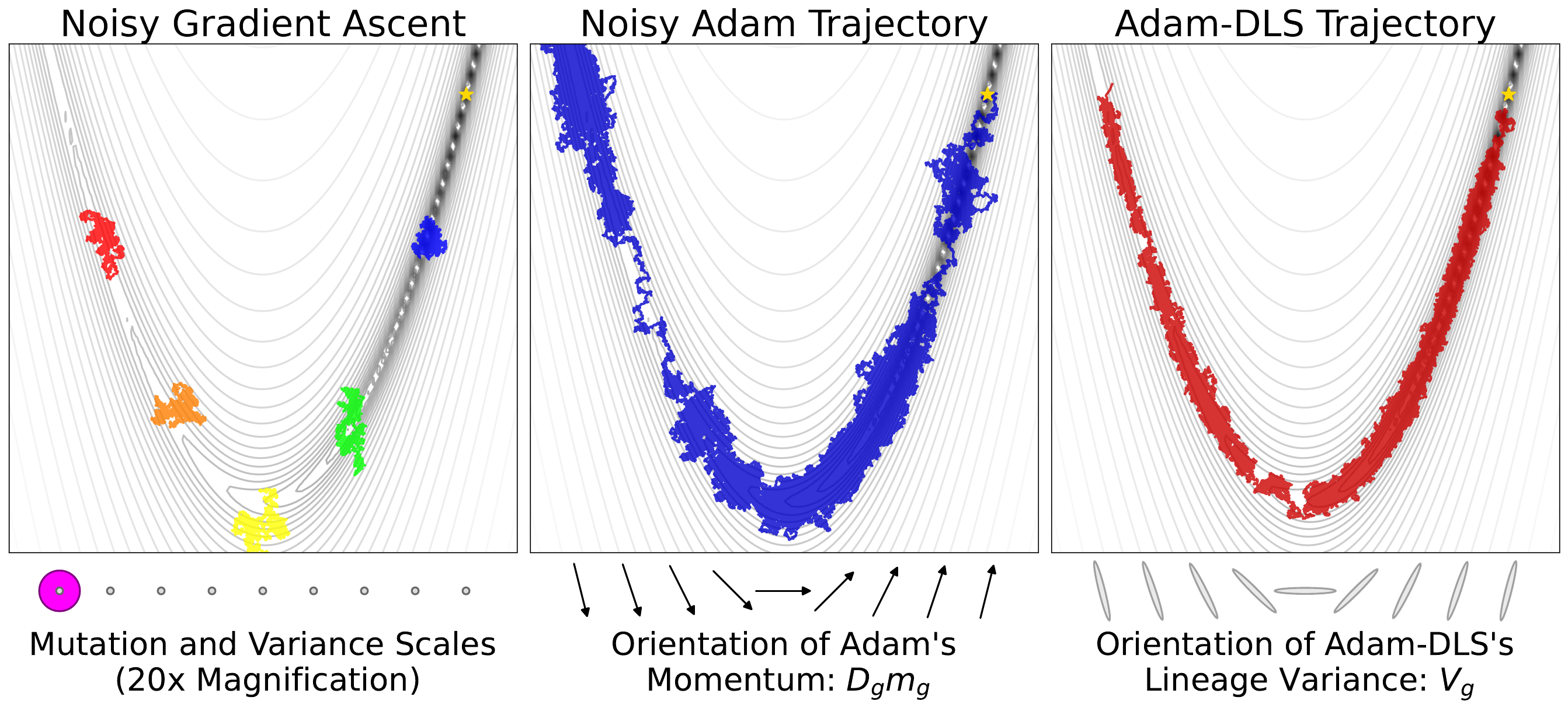}
\caption{The Rosenbrock benchmark (with $a=2$, $b=100$) is attempted by three algorithms: Noisy Gradient Ascent, Noisy Adam, and Adam-DLS. While 
noisy gradient ascent is evolutionarily compliant (it fits the DLS framework) it cannot solve this benchmark efficiently. With an isotropic variance, evolutionary fidelity, $\sigma_g^2 H_g\ll I$, bounds the allowed step size by the high curvature across the ridge. Concretely, at $(-2,4)$ the spectral norm of the Hessian is $\vert\vert H\vert\vert_2=3402$ forcing $\sigma_g^2=0.01/\vert\vert H\vert\vert_2=3\times10^{-6}\ll\mu^2=10^{-4}$. As a result, the lineage experiences almost no selection pressure and hence follows a mutation-dominated random walk. A more sophisticated optimizer (such as Adam) is necessary to solve the benchmark efficiently. While Adam is not a faithful simulation of evolution, Adam-DLS is (see Sec.~\ref{SecRecoverAdam}). Evolutionary compliance, $V_g H_g\ll I$, permits an anisotropic variance which is small across the ridge but large along the ridge. In Adam-DLS, this elongation has two sources: a RMSProp-like diagonal preconditioner, $D_g$, and a rank-1 extension in the (pre-conditioned) direction of historical momentum, $D_g m_g$. The latter term has the exact same orientation as Adam's additive momentum and yields a similar effect. Importantly, Adam-DLS (unlike Adam) is a faithful simulation of evolution. In this figure, both Adam and Adam-DLS take $\alpha=10^{-3}$, $\beta_1=0.99$, $\beta_2=0.999$, $\epsilon=10^{-8}$, and $\mu^2=10^{-4}$. These hyper-parameters maintain $\text{Tr}(V_gH_g)\lesssim0.01$ and $W_g\succeq0$ without ad hoc mutation spikes. A histogram of the $d_g$ scalar is unimodal with a mean of $0.91$ and standard deviation of $28.03$. An interactive reproduction of this benchmark is available at \url{https://github.com/danielgrimmer/adam-dls}.}\label{FigRosenbrock}
\end{figure}

As we shall now discuss, the DLS framework allows us to move beyond this prior limitation into considerations of anisotropic pre-conditioners and second-order geometries. Indeed, evolutionary compliance, $V_g H_g\ll I$, does allow for the lineage's variance to be large precisely in those directions where the fitness landscape's curvature is small. By Fisher's key insight, Eq.~\eqref{EqFisherKeyInsight}, an increased variance along the ridge, means an increased selection pressure up the ridge. Indeed, a wide range of pre-conditioned optimization algorithms are already evolutionarily compliant, once equipped with the DLS noise relation.
\begin{corollary}[Pre-conditioned SGA plus the DLS Noise Relation]\label{ThmDLSNoiseRelLeading}
Consider a generic implementation of a pre-conditioned SGA with a variable pre-conditioner, $V_g\succeq0$, and a variable Gaussian noise, $W_g\succeq0$, both being symmetric and positive semi-definite. Namely,
\begin{align}
&\phi_{g+1}
=\phi_g + V_g\,\nabla\log(\mathcal{F}_\lambda)(\phi_g) + \xi_g,
\qquad \xi_g\sim\mathcal{N}(0,W_g),
\end{align}
In order for this to be a biologically faithful simulation of evolution (specifically, a DLS) it must satisfy the DLS noise relation for some fixed mutation rate, $\mu^2\geq0$,
\begin{align}\label{EqDLSNoiseRel}
W_g=\mu^2I-(V_{g+1}-V_g),
\end{align}
Moreover, the objective function, $\log(\mathcal{F}_\lambda)$, needs to be approximately quadratic with Hessian, $H_g$, within a region of several standard deviations of $\phi_{g}$ (as measured by $V_g$) with $V_g H_g\ll I$.
Lastly, if $g>0$ then $V_g$ must be independent of local data at $\phi_{g}$.
\end{corollary}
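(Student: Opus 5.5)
The plan is to read Corollary~\ref{ThmDLSNoiseRelLeading} as the anisotropic counterpart of Corollary~\ref{ThmDLSNoiseRelIso}. It should then follow from the Leading Order DLS Update of Theorem~\ref{ThmDLSUpdatePackage}, combined with the independence argument given in the discussion preceding Corollary~\ref{ThmDLSNoiseRelIso}. First I will identify the given pre-conditioned SGA iteration with a tracked Gaussian lineage. At generation $g$ the lineage distribution $p_g(\phi)$ is taken to be $\mathcal{N}(\phi_g,V_g)$, so the pre-conditioner is read as the lineage's variance, as dictated by Fisher's key insight, Eq.~\eqref{EqFisherKeyInsight}. The stated small-variance hypotheses, namely that $\log(\mathcal{F}_\lambda)$ is approximately quadratic over several standard deviations (measured by $V_g$) and that $V_gH_g\ll I$, are exactly the hypotheses of the leading-order part of Theorem~\ref{ThmDLSUpdatePackage}. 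That condition also implies $V_g^{1/2}H_gV_g^{1/2}\preceq I$, since the two matrices are similar up to the square-root conjugation.

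Next I will apply Theorem~\ref{ThmDLSUpdatePackage}. The mean update Eq.~\eqref{EqThm2LeadingPhi}, with drift $\xi_g\sim\mathcal{N}(0,W_g)$, matches the given iteration term by term. The variance update Eq.~\eqref{EqThm2LeadingV} must then hold with the prescribed next pre-conditioner $V_{g+1}$. Solving that update for $W_g$ gives the DLS noise relation Eq.~\eqref{EqDLSNoiseRel}. Necessity runs the same way in reverse. The down-sampling construction of Sec.~\ref{SecDownSampling} fixes the drift covariance to equal the variance reduction $\tilde V_{g+1}-V_{g+1}$, with $\tilde V_{g+1}=V_g+\mu^2I$. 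Any other noise covariance would therefore fail to correspond to picking a random equal-shaped Gaussian sub-population, and the fitness-weighted reassembly of Sec.~\ref{SecLineageVsPopulation} would no longer recover $P_g$. The requirements that $W_g$ and $V_{g+1}$ be symmetric positive semi-definite are exactly the admissibility conditions in Theorem~\ref{ThmDLSUpdatePackage}, so every such schedule is allowed. Because $\mu^2$ is a property of the species, it must be a single fixed constant across all $g$.

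Finally, the condition that $V_g$ be independent of local data at $\phi_g$ for $g>0$ comes from the ordering argument made before Corollary~\ref{ThmDLSNoiseRelIso}. The decomposition of $\tilde p_{g+1}$ into equal sub-populations, and hence $W_g$ and $V_{g+1}$, must be fixed before the random draw that determines $\phi_{g+1}$. Only $V_0$ escapes this constraint.

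The main obstacle is making the necessity direction ("must satisfy") precise rather than merely restating sufficiency. I would handle it by stating explicitly that a DLS is defined as an iteration generated by Theorem~\ref{ThmDLSUpdatePackage}'s Gaussian down-sampling. Under that definition, the covariance identity $\text{var}(\text{means})=\tilde V_{g+1}-V_{g+1}$ for a Gaussian mixture of equal Gaussians forces $W_g$ uniquely.
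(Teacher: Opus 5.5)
Your proposal is correct and follows the same route as the paper, whose proof simply calls the corollary a restatement of the Leading Order DLS Update in Theorem~\ref{ThmDLSUpdatePackage}, combined with the preceding discussion. That discussion supplies Fisher's key insight identifying the pre-conditioner with the lineage variance, and the ordering argument that $V_{g+1}$ must be fixed before $\xi_g$ is drawn. Defining a DLS through equal-shaped Gaussian down-sampling and using the total-variance identity to force $W_g=\tilde V_{g+1}-V_{g+1}$ uniquely makes explicit the ``must'' direction that the paper leaves implicit.
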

\begin{proof}
This is just a restatement of part of Theorem~\ref{ThmDLSUpdatePackage} and the above discussion.
\end{proof}

It is instructive to begin by considering the simplest form of anisotropy: diagonal pre-conditioning. A foundational example in the machine learning literature is the RMSProp algorithm (a precursor to Adam). RMSProp tracks an exponential moving average of the squared gradients for each parameter, $s_{g+1}=\beta_2 s_g+(1-\beta_2) f_g^2$, where $\beta_2\in [0,1)$ controls the decay rate and $f_g^2$ is the element-wise square of the gradient, $f_g=\nabla\log(\mathcal{F}_\lambda)(\phi_g)$. Depending on how $s_g$ is initialized (particularly, if $s_0=0$) it may be a biased estimator of the problem's true second moments. This can be corrected by using $\hat{s}_g = s_g / (1 - \beta_2^{g})$ instead. It should also be noted that in our indexing convention, the quantity $s_g$ represents the average second moments of the \textit{past gradients}, not yet including $f_g^2$. 

At each step, the RMSProp algorithm rescales the current gradient, $f_g$, coordinate-wise by the inverse square root of this average as, 
\begin{align}
V_g \ f_g=\text{diag}\left(\frac{\alpha}{\sqrt{s_{g+1}/(1 - \beta_2^{g+1})}+\epsilon}\right) \ f_g.
\end{align}
for some base learning rate, $\alpha$, with some small $\epsilon$ included for numerical stability. In effect, this pre-conditioning implements a rescaling of each coordinate so as to make the level sets of the objective function more ``rounded'', making local minima more accessible. 

Notice, however, that there is a minor issue with implementing RMSProp within the DLS framework, $V_g$ depends upon $s_{g+1}$ and hence on $f_{g}$. Recall from above that this is not allowed except at $g=0$. The fix, however, is simple:\footnote{In the online learning optimization literature, this ``off-by-one'' shift represents the distinction between predictive and data-dependent regularization \citep{Brendan2017}.} we can instead use
\begin{align}
V_g \ f_g=\text{diag}\left(\frac{\alpha}{\sqrt{s_g/(1 - \beta_2^{g+1})}+\epsilon}\right) \ f_g.
\end{align}
Note that we have not reduced the index on the  $(1 - \beta_2^{g+1})$ factor. One must then initialize $s_0$ to yield the user's specification for $V_0$, the lineage's initial (diagonal) variance. A very natural choice would be $s_0=(1-\beta_2)f_0^2$ from which it follows that $s_0/(1-\beta_2)=s_1/(1-\beta_2^2)=f_0^2$. This initialization for $s_0$ keeps the pre-conditioner perfectly stable through the early updates while every subsequent $s_g/(1 - \beta_2^{g+1})$ mechanically mimics the RMSProp buildup. With this index-shift and initialization, one can effortlessly implement RMSProp in an evolutionarily faithful way. One simply chooses to down-sample so that the lineage's variance always proceeds according to the desired adaptive scheme. Further details regarding RMSProp-DLS can be found in Sec.~\ref{SecRecoverAdam}, just take $\beta_1=0$ for the Adam-like optimizer (Adam-DLS) which we there construct.

Beyond this example, a gold standard for pre-conditioning in optimization theory is Natural Gradient Descent \citep{Amari1998}, where one's pre-conditioner is proportional to the inverse Fisher Information Matrix (FIM).\footnote{Recent work by \cite{Otwinowski2020} has elegantly framed natural selection as natural gradient descent. However, they rely on inventing novel recombination operators to prevent variance collapse in low-mutation regimes. By contrast, our framework proceeds asexually with mutation to derive a flexible model of genetic drift which can recover many battle-tested optimizers including many efficient approximations of natural gradient descent.} However, for modern, over-parameterized neural networks, computing and storing a full $N \times N$ matrix is computationally intractable. Consequently, the field has developed highly efficient methods of approximating FIM, most notably K-FAC \citep{Martens2015} and Shampoo \citep{Gupta2018}. More recently, the SOAP optimizer \citep{Vyas2025} has improved upon Shampoo by rotating gradients into the principal curvature eigenspace of the Shampoo pre-conditioner and then applying RMSProp's diagonal scaling, before rotating back. All of these efficient approximations of Natural Gradient Descent can be realized in the DLS framework so long as one performs the necessary index shifting so that $V_g$ is independent of the local data at $\phi_g$.

Another gold standard is Newton's method \citep{Nocedal2006}, where one's preconditioner is proportional to the inverse Hessian matrix. Once again, however, computing and storing a full $N \times N$ matrix is often computationally intractable. Various types of quasi-Newton methods have been developed to efficiently approximate the inverse Hessian. For instance, the Generalized Gauss-Newton method \citep{Schraudolph2002} proceeds via first-order Jacobian matrices, $H\approx J^T J$, whereas BFGS (and its memory-limited version, L-BFGS) approximate curvature using only recent first-order information \citep{Liu1989}. All of these quasi-Newton methods can be realized in the DLS framework (after the necessary index shifting).

Beyond approximating these two gold standards, emerging methods such as the Muon optimizer \citep{Jordan2024} are carving a new path entirely by using Newton-Schulz iterations to orthogonally precondition the update, preventing the gradient from losing rank over time. See also its recent extension: Muon2 \citep{Muon2} and Newton-Muon \citep{NewtonMuon}. Given that the pre-conditioners used in all of the above-discussed methods may change over time, Eq.~\eqref{EqDLSNoiseRel} describes exactly what modulation of the shape and size of genetic drift is required to maintain biological fidelity.

Viewed through the lens of the DLS framework, all of these advanced optimizers are nothing more than specific analytical bookkeeping choices for the sub-population variance, $V_g$. Importantly, choosing to work with one type of pre-conditioner or another does not amount to postulating that the population at large has a corresponding variance structure. For instance, working with a diagonal pre-conditioner (as in RMSProp) does not amount to an assumption that something prevents the species from having covariances between its genes. Instead, it merely means choosing to analyze the wider population (whatever its variance structure) in terms of sub-populations which lack such covariances. In general, there is no reason to assume that the variance of the lineage in question will naturally evolve towards one's preferred kind of pre-conditioner, be it an inverse FIM or some other Kronecker-factored geometry. Rather, at every generation, we are manually dividing the lineage into sub-populations with a certain anisotropic shape and then picking one at random to continue following. Concretely, we might pick for their variances, $V_g$, to match the matrices generated by K-FAC, Shampoo, SOAP, or L-BFGS. By dynamically shaping the sub-populations in this way, the resulting lineages may more efficiently navigate the fitness landscape. 

At this point, one may feel the need to cry ``Foul!'' that we are somehow picking winners and losers and thereby unnaturally speeding evolution along. But one must recall that these sub-populations do not carry any special demographic ontology; it is purely a matter of analytical bookkeeping. The key mechanism which keeps our bookkeeping honest is the DLS noise relation: concretely, if one chooses for the next batch of sub-populations to be slimmer along the x-axis, then the distribution of their means will be correspondingly wider. Therefore, when one randomly selects one of these sub-populations there will inevitably be an increased amount of genetic drift along the x-axis. Hence, our choice to reduce the amount of ``fuel'' which selection has in a certain direction is automatically balanced by an increased amount of noise in that direction. Alternatively, we might increase the amount of selection in a certain direction, balanced by decreased noise in that direction. Ultimately, by Theorem~\ref{ThmDLSUpdatePackage}, the total population's Fisherian dynamics, Eq.~\eqref{EqStochasticGenUpdateRule}, can be perfectly recovered by reassembling an ensemble of DLS simulations, \textit{no matter what bookkeeping choices we make.}

But one must not go too far in the other direction either; it would be a mistake to label our bookkeeping choices ``gauge'' in the physicist's sense and cry ``Unphysical!'' about every decomposition-dependent aspect of our evolutionary simulation. To the contrary, our sub-populations are real collections of individuals (e.g., grouped together because they share similar genotypes). Over time these sub-populations form evolutionarily-salient lineages (daughters of daughters of daughters, etc.). Suppose that after specifying some bookkeeping (``gauge'') choices we simulate exactly one lineage and notice that it develops adaptation $X$ in time $T$. This ostensibly ``gauge-dependent'' fact amounts to a real achievement by this lineage in particular as well as a real part of the total population's evolutionary history. (And, of course, more so if accomplished by multiple lineages.) Recall from Eq.~\eqref{EqLineageWeight} that the demographic weight of any lineage is the exponential of its accumulated log-fitness. Hence, by utilizing advanced optimizers to find lineages which quickly climb the fitness landscape, we are computationally isolating those exact lineages which most contribute to the total population's evolutionary history.

It is revealing to consider the worst possible way of running a DLS simulation while maintaining its evolutionary fidelity. Namely, one might begin from an individual genotype ($V_0=0$) and then down-sample to an individual genotype at every subsequent step ($V_g=0$). In this case, because the lineage never has any variance, there is never any ``fuel'' for selection to act upon. The only way that the lineage's genotype, $\phi_g$, moves in this case is through genetic drift. At every step, the amount that one must down-sample by to maintain $V_{g+1}=0$ is $W_g=\mu^2I$. Each child is just one random mutation, $\xi_g\sim\mathcal{N}(0,\mu^2I)$, away from their parent. For these bookkeeping choices, an ensemble of DLS simulations would simply amount to myriad unbiased random walks heading out in every direction. By random chance (again, no selection pressure), some of these lineages will wander into a higher fitness region. An even smaller subset of these lineages will (again, by random chance) stay near a fitness peak. As rare as these needles-in-a-haystack are, they will receive a tremendous demographic weight when it comes time to reassemble the total population, see Eq.~\eqref{EqLineageWeight}. As un-illuminating as this pure-drift rendition of evolution is, it is completely faithful to the total population's deterministic Fisherian dynamics. Indeed, it is a special case of the asexual Fisher-Wright equivalence.

The key difference between this Pure Drift view and Fisher's view is how much of the story is being accounted for by selection effects \textit{within lineages} vs \textit{between lineages}. 
Concretely, the Pure Drift view eschews all variance (and hence, selection) in the lineage dynamics, instead pushing all of the variance (and hence, selection) into the space between the lineages, as revealed at the reassembly phase. By contrast, one can think of Fisher's view as involving only one lineage (i.e., the total population) which alone carries all of the population's variance. The DLS framework is situated (alongside Wright) somewhere in the middle of this spectrum. It seeks to make each lineage's variance as large as possible (while maintaining computational efficiency) so that each individual lineage becomes more meaningful, capturing more of the evolutionary story. From this perspective, one can think of the above-discussed techniques as helping us locate the most demographically weighty lineages without having to grind through all simulating all of the trillions of failed lineages.

So far, our discussion in this subsection has operated under the assumption that the variance of the lineage that we are tracking is small relative to the local curvature of the fitness landscape ($V_g H_g \ll I$). If we relax this assumption we must account for the selection effect of the Hessian matrix itself, $H_g$. In doing so, we recover the Full DLS update rule, which maps perfectly onto second-order optimization---specifically, the Damped Newton's method.
\begin{corollary}[Damped Newton plus the DLS Noise Relation]\label{ThmDLSNoiseRelFull}
Consider a generic implementation of the Damped Newton's method with a variable inverse damping matrix, $V_g\succeq0$, and a variable Gaussian noise, $W_g\succeq0$, both being symmetric and positive semi-definite. Namely,
\begin{align}
&\phi_{g+1}
=\phi_g + (V_g^{-1}-H_g)^{-1}\,\nabla\log(\mathcal{F}_\lambda)(\phi_g) + \xi_g,
\qquad \xi_g\sim\mathcal{N}(0,W_g),
\end{align}
where $H_g$ is the Hessian of the objective function, $\log(\mathcal{F}_\lambda)$, at $\phi_g$. In order for this to be a biologically faithful simulation of evolution (specifically, a DLS) it must satisfy the DLS noise relation for some fixed mutation rate, $\mu^2\geq0$,
\begin{align}
W_g=\mu^2I-V_{g+1}+(V_g^{-1}-H_g)^{-1}.
\end{align}
Moreover, the objective function, $\log(\mathcal{F}_\lambda)$, needs to be approximately quadratic within a region of several standard deviations of $\phi_{g}$ (as measured by $V_g$) with $V_{g}^{1/2} \ H_g \ V_{g}^{1/2}\preceq I$. Lastly, if $g>0$ then $V_g$ must be independent of local data at $\phi_{g}$.
\end{corollary}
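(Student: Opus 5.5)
This corollary should follow by matching the given algorithm, term by term, against the Full DLS Update of Theorem~\ref{ThmDLSUpdatePackage}, in the same way Corollaries~\ref{ThmDLSNoiseRelIso} and~\ref{ThmDLSNoiseRelLeading} were obtained from its leading-order and isotropic versions. The plan has three steps: match the update rules, solve the variance recursion for $W_g$, and collect the side conditions.

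\textbf{Matching the update.} Place the damped Newton iteration beside Eq.~\eqref{EqThm2FullPhi}. The displacements agree exactly once we identify three things: the inverse damping matrix $V_g$ with the tracked lineage's covariance, the objective $\log(\mathcal{F}_\lambda)$ with the log-fitness, and the injected noise $\xi_g\sim\mathcal{N}(0,W_g)$ with the genetic drift produced by random down-sampling. Under these identifications the algorithm's mean update is precisely the DLS mean update. This holds provided the hypotheses of Theorem~\ref{ThmGaussianLineageProp} and Theorem~\ref{ThmDLSUpdatePackage} are satisfied, namely local quadraticity over several standard deviations and $V_g^{1/2}H_gV_g^{1/2}\preceq I$. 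These are exactly the conditions restated in the corollary.

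\textbf{Deriving the noise relation.} The covariance recursion Eq.~\eqref{EqThm2FullV} reads $V_{g+1}=(V_g^{-1}-H_g)^{-1}+\mu^2I-W_g$. Solving it for $W_g$ gives $W_g=\mu^2I-V_{g+1}+(V_g^{-1}-H_g)^{-1}$.

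\begin{itemize}
\item \emph{Why a DLS must obey it.} The mean shift $\xi_g$ comes from selecting one Gaussian sub-population out of the decomposition of $\tilde p_{g+1}$. By Eq.~\eqref{EqDownSampleTilde}, its covariance is forced to equal the variance reduction $\tilde V_{g+1}-V_{g+1}$.
\item \emph{Why obeying it suffices.} If the algorithm's $W_g$ satisfies this relation with $W_g\succeq0$ and $V_{g+1}\succeq0$, then Theorem~\ref{ThmDLSUpdatePackage} states that this choice is an admissible down-sampling. The ensemble therefore reassembles to the Fisherian dynamics.
\item \emph{The singular case.} When $V_g$ is singular, I would read $(V_g^{-1}-H_g)^{-1}$ as $(I-V_gH_g)^{-1}V_g$, following the proof of Theorem~\ref{ThmGaussianLineageProp}. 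This keeps the expression well-defined, and in particular gives $0$ at $V_g=0$.
\end{itemize}

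\textbf{The locality condition.} The requirement that $V_g$ be independent of local data at $\phi_g$ for $g>0$ is inherited from the discussion in Sec.~\ref{SecDLSNoiseRel}. The down-sampling amount $W_{g-1}$, and hence $V_g$, is fixed before the random sub-population that determines $\phi_g$ is drawn. Only $V_0$ is exempt.

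\textbf{Main obstacle.} The one genuine subtlety is that $H_g$ is local data at $\phi_g$ and enters both the step and $W_g$. I must argue that this is legitimate. The dependence arises from selection acting within the generation, through Theorem~\ref{ThmGaussianLineageProp}, and not from the bookkeeping choice of $V_{g+1}$. So only $V_{g+1}$, and not $W_g$, needs to be free of dependence on $\phi_{g+1}$. I would also note that $W_g\succeq0$ constrains the admissible schedules $\{V_g\}$; it is not automatic.
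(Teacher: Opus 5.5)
Your proposal is correct and follows the paper's route. The paper's entire proof is the remark that the corollary restates part of Theorem~\ref{ThmDLSUpdatePackage} together with the locality discussion of Sec.~\ref{SecDLSNoiseRel}, and you unpack exactly that: you match the step to Eq.~\eqref{EqThm2FullPhi}, solve Eq.~\eqref{EqThm2FullV} for $W_g$, and inherit the $g>0$ condition. Your remark about $H_g$ is a useful clarification the paper leaves implicit, but the relevant distinction is data at $\phi_g$ (available when $W_g$ and $V_{g+1}$ are chosen) versus data at $\phi_{g+1}$ (forbidden to both, and automatically so for $W_g$ once it holds for $V_{g+1}$).
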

\begin{proof}
This is just a restatement of part of Theorem~\ref{ThmDLSUpdatePackage} and the above discussion.
\end{proof}
\noindent In many applications, utilizing the raw inverse Hessian as a pre-conditioner has been found to be notoriously unstable. Consequently, it is standard practice to introduce a damping matrix to the above-discussed Newton's method to ensure that the update step remains well-behaved \citep{Nocedal2006}. In particular, this damping mechanism is mathematically equivalent to establishing a ``region of trust,'' which strictly bounds the update step to prevent the algorithm from making wild leaps when the local quadratic model is only reliable locally. Remarkably, in the Full DLS update, the inverse biological variance of the lineage, $V_g^{-1}$, natively fulfills this exact mathematical role. The variance of the lineage automatically stabilizes the second-order optimization process. This will be a faithful simulation of evolution provided the genetic drift, $W_g$, is injected precisely according to the DLS noise relation.

Thus far, we have demonstrated that advanced optimization techniques like pre-conditioning and Hessian damping emerge organically from population genetics. However, not every optimization algorithm is evolutionarily compliant right out of the box. A notable outlier is the Adam optimizer, arguably the most ubiquitous optimization algorithm in modern deep learning. While Adam's diagonal pre-conditioner (i.e., RMSProp) poses no major issues, its formulation relies on a physics-inspired momentum term that gives the algorithm some ``inertia'' so that it can ``coast'' across flat landscapes \citep{Polyak1964}. In the next section, we will try to cast Adam into the DLS framework, exposing exactly why this inertial form of momentum violates evolutionary first principles. Fortunately, with only minimal surgery, Adam can be brought back into strict evolutionary compliance.

\subsection{Adam-DLS: Bringing Adam into Evolutionary Compliance}\label{SecRecoverAdam}
The Adam optimizer is arguably the most successful (or, at least, most ubiquitous) first-order optimization heuristic in modern machine learning. In this section, we recast Adam into the Darwinian Lineage Simulation (DLS) framework, isolating its non-evolutionary artifact (additive momentum), and demonstrate the minimal analytical surgery required to recover Adam's advantages purely through the variance shaping techniques described above.

Let the current fitness gradient be denoted as $f_g = \nabla \log \mathcal{F}_\lambda(\phi_g)$ with its stochastic dependence on $\lambda$ suppressed. Just as in our above discussion of RMSProp, Adam tracks an exponential moving average of the squared gradients for each parameter, $s_{g+1}=\beta_2 s_{g}+(1-\beta_2) f_g^2$. Unlike RMSProp, Adam also tracks an exponential moving average of the recent gradients, $m_{g+1} = \beta_1 m_g + (1 - \beta_1) f_g$ where $\beta_1\in[0,1)$ controls the decay rate. One can define $\hat{m}_g = m_g / (1 - \beta_1^{g})$ to correct for an initialization bias which results from taking $m_0=0$. It should be noted that in our indexing convention, the quantities $s_g$ and $m_g$ represent the second and first moments of the \textit{past gradients}, not yet including $f_g^2$ and $f_g$ respectively. To include these, the index must be advanced by one from $g$ to $g+1$. 

In this notation, the standard Adam update step plus a naive implementation of mutation-induced genetic drift, $\xi_g^\text{naive}\sim\mathcal{N}(0,\mu^2I)$, is then,
\begin{align}
\nonumber
&\text{Noisy Adam Update:},\\
\nonumber
&\text{Initialize: }\quad m_0=0,\quad s_0=0,\\
\nonumber
&\phi_{g+1}= \phi_g + \frac{\alpha}{\sqrt{\hat{s}_{g+1}}+\epsilon} \ \hat{m}_{g+1}+\xi_g^\text{naive}\\
\label{EqStandardAdam}
&\qquad=\phi_g + \frac{\alpha}{\sqrt{s_{g+1}/ (1 - \beta_2^{g+1})}+\epsilon} \frac{(1-\beta_1)f_g+ \beta_1 m_g}{1-\beta_1^{g+1}}+\xi_g^\text{naive}
\end{align}
for some base learning rate $\alpha$ with some small $\epsilon$ included for numerical stability. The $\alpha/(\sqrt{\hat{s}_{g+1}}+\epsilon)$ term acts as a diagonal preconditioner (just as in RMSProp) which now acts on the current gradient, $f_g$, as well as the historical gradients, $\hat{m}_{g}$. Recall from Corollary~\ref{ThmDLSNoiseRelLeading}, however, that it is forbidden for the pre-conditioner to depend upon $f_g$, which it does here through $s_{g+1}$. The fix now is as simple as it was in the RMSProp case: we can use $\sqrt{s_g/ (1 - \beta_2^{g+1})}$ instead. This, however, means that we cannot initialize with $s_0=0$ lest we hit a divergence at generation $g=0$. Instead, $s_0$ will have to be initialized so as to yield the user's specification for, $V_0$, the lineage's initial (diagonal) variance. As was noted above, a very natural choice would be $s_0=(1-\beta_2)f_0^2$ from which it follows that $s_0/(1-\beta_2)=s_1/(1-\beta_2^2)=f_0^2$. This choice keeps the pre-conditioner perfectly stable through the early updates while every subsequent $s_g$ mechanically mimics Adam's usual scheme.

The first phase of our surgery on Adam is now complete. To recap our progress: We have shifted the index on the second moments back by one and introduced a non-trivial initialization. Concretely, we have:
\begin{align}
\nonumber
&\text{Mid-Surgery Noisy Adam Update:},\\
\nonumber
&\text{Initialize: }\quad m_0=0,\quad s_0=(1-\beta_2)f_0^2,\\
\nonumber
&\phi_{g+1}= \phi_g + \frac{\alpha}{\sqrt{s_{g}/ (1 - \beta_2^{g+1})}+\epsilon} \frac{(1-\beta_1)f_g+ \beta_1 m_g}{1-\beta_1^{g+1}}+\xi_g^\text{naive}\\
&\qquad= \phi_g + \underbrace{(1-\beta_1)D_g f_g}_{\text{Current Gradient Step}} + \underbrace{\beta_1 D_g m_{g}}_{\text{Past Gradient Step}}+\xi_g^\text{naive},
\end{align}
where $\xi_g^\text{naive}\sim\mathcal{N}(0,\mu^2I)$ is a naive implementation of genetic drift. Notice that we have collected together all of the diagonal pre-conditioning terms into the following matrix, 
\begin{align}
D_g=\frac{\alpha}{\sqrt{s_{g}/ (1 - \beta_2^{g+1})}+\epsilon} \ \frac{1}{1-\beta_1^{g+1}}.  
\end{align}
One part of this above update step is in the (pre-conditioned) direction of the current gradient, $f_g = \nabla \log \mathcal{F}_\lambda(\phi_g)$, and another part is in the (pre-conditioned) direction of the past gradients, $m_g$. This update step is still not of the leading order DLS form: $\phi_{g+1} = \phi_g + V_g f_g + \xi_g$. In any Darwinian framework, directional adaptation requires selection pressure. In the absence of a fitness gradient, $f_g=0$, evolution can only move through the landscape via random genetic drift, $\xi_g$. By contrast, Adam has an inertial form of momentum, continuing in the direction of its summarized history, $D_g m_g$.

We can bring Adam into compliance with evolutionary dynamics, however, by ensuring that its historical momentum effects only act as an amplifier for existing selection pressures. The simplest way to do this is to replace $m_g$ with a projector from $f_g$ onto $m_g/\|m_g\|$. In order to keep the pre-conditioner symmetric (so that it can be interpreted as a population variance) it is then necessary to place $D_g$ symmetrically on either side of the projector. The resulting pre-conditioner for Adam-DLS is,
\begin{align}\label{EqAdamDLSVg}
V_g \coloneqq (1-\beta_1)D_g + \beta_1 \ 
\begin{cases}
D_g \frac{m_g m_g^T}{m_g^T D_g m_g} D_g& , \ m_g^T D_g m_g\neq0\\
0 & , \ m_g^T D_g m_g=0
\end{cases}.
\end{align}
This is a diagonal pre-conditioner plus a rank-1 matrix extending the lineage's variance in the $D_g m_g$ direction. This will cause increased selection pressure along the direction of historically successful adaptations. It will be useful momentarily to have also defined the following scalar projection term,
\begin{equation}
d_g \coloneqq 
\begin{cases}
\frac{m_g^T D_g f_g}{m_g^T D_g m_g}& , \ m_g^T D_g m_g\neq0\\
1 & , \ m_g^T D_g m_g=0
\end{cases}.
\end{equation}

Plugging this symmetric covariance matrix into the leading order DLS update rule yields:
\begin{align}
\nonumber
&\text{Adam-DLS Update:}\\
\nonumber
&\text{Initialize: }\quad m_0=0,\quad s_0=(1-\beta_2)f_0^2,\\
\nonumber
&\phi_{g+1}^{\text{}} = \phi_g + (1-\beta_1)D_g f_g + \beta_1 \ d_g \ D_g \ m_g+\xi_g\\
&\qquad= \phi_g + \frac{\alpha}{\sqrt{s_g/ (1 - \beta_2^{g+1})}+\epsilon} \ \frac{(1-\beta_1)f_g+ \beta_1 d_g m_g}{1-\beta_1^{g+1}} + \xi_g
\end{align}
where $\xi_g\sim\mathcal{N}(0,W_g)$ is no longer naive but now fixed by the DLS Noise relation. This is exactly like the standard Noisy Adam update rule except with three changes: 
\begin{itemize}
    \item[1.] We have shifted the index on the second moments back by one (replacing $\sqrt{s_{g+1}/ (1 - \beta_2^{g+1})}$ with $\sqrt{s_g/ (1 - \beta_2^{g+1})}$). Notice that the same $(1-\beta^x)$ factor is applied as in vanilla Adam at each step. We also introduced a non-trivial initialization, $s_0 = (1-\beta_2)f_0^2$.
    \item[2.] The noise term, $\xi_g$, is now structured by the DLS noise relation which allows Adam-DLS to cross plateaus in an evolutionarily compliant way (i.e., random genetic drift). See Appendix~\ref{AppendixAdamDLSNoise} for a guide on how to efficiently draw samples from the DLS noise profile associated with Adam-DLS.
    \item[3.] Lastly, there is a new scalar, $d_g$, appearing in front of the historical momentum term, $D_g m_g$. This scalar modifies the size of the momentum term depending on how aligned the current gradient, $f_g$, is with the pre-conditioned past gradients, $D_g m_g$. If $f_g=m_g$ is aligned with the historical gradients then $d_g=1$ and we perfectly recover Adam's momentum term. By contrast, if $f_g$ is orthogonal to $m_g$ (understood with respect to $D_g$) then the rank-1 addition to the diagonal variance is not felt at the selection step and there is no momentum effect. Beyond these two cases, the part of $f_g$ which aligns with $m_g\ m_g^T$ is always amplified.
\end{itemize}
See Fig.~\ref{FigRosenbrock} for a direct comparison of Adam and Adam-DLS on the Rosenbrock test function. Both pass this benchmark demonstrating that our surgery has not incapacitated Adam's ability to navigate ill-conditioned landscapes.

\begin{figure}[t!]
\begin{equation*}
\begin{aligned}
    &\textbf{Adam-DLS Algorithm}\\
    &\rule{130mm}{0.4pt} \\
    &\textbf{input}      : \alpha \text{ (lr)}, \beta_1, \beta_2 \text{ (betas)}, \delta \text{ (noise safety)}, \epsilon \text{ (var. safety)}, \mu^2 \text{ (mutation rate)}\\
    &\phi_0 \text{ (init. genotype)}, \log\mathcal{F}_\lambda(\phi) \text{ (log fitness)},  \\
    &\textbf{initialize} :  m_{0} \leftarrow 0 \text{ (first moment)}\\[-1.ex]
    &\rule{130mm}{0.4pt} \\
    &f_0 \leftarrow \nabla \log\mathcal{F}_\lambda(\phi_0)\\
    &s_{0} \leftarrow (1-\beta_2) f_0^2\\
    &\textbf{for} \: g=0 \text{ to } \dots \textbf{do} \\
    &\hspace{5mm} f_g \leftarrow \nabla \log\mathcal{F}_\lambda(\phi_g) \\
    &\hspace{5mm} m_{g+1} \leftarrow \beta_1 m_g + (1 - \beta_1) f_g \\
    &\hspace{5mm} s_{g+1} \leftarrow \beta_2 s_g + (1 - \beta_2) f_g^2 \\
    &\hspace{5mm} D_g\leftarrow\frac{\alpha}{\sqrt{s_g/(1-\beta_2^{g+1})}+\epsilon} \ \frac{1}{1-\beta_1^{g+1}}\\
    &\hspace{5mm} D_{g+1}\leftarrow\frac{\alpha}{\sqrt{s_{g+1}/(1-\beta_2^{g+2})}+\epsilon} \ \frac{1}{1-\beta_1^{g+2}}\\    
    &\hspace{5mm} \textbf{if } m_g^T D_g m_g = 0 \textbf{ then } d_g \leftarrow 1 \\
    &\hspace{5mm} \textbf{else } d_g \leftarrow\frac{m_g^T D_g f_g}{m_g^T D_g m_g}\\
    &\hspace{5mm} \xi_g \leftarrow \text{AdamDLSNoise}(\alpha,\beta_1,\delta,\epsilon,\mu^2,m_g,D_g,m_{g+1},D_{g+1}) \qquad \text{(See Appendix~\ref{AppendixAdamDLSNoise})}\\
    &\hspace{5mm} \phi_{g+1} \leftarrow \phi_g + D_g \left((1-\beta_1)f_g+ \beta_1 d_g m_g\right) + \xi_g \\
    &\rule{130mm}{0.4pt} \\[-1.ex]
    &\textbf{return} \: \phi_G \\[-1.ex]
    &\rule{130mm}{0.4pt} \\[-1.ex]
\end{aligned}
\end{equation*}
\vspace{-1.0cm}
\end{figure}

\section{Conclusion}
This paper has performed a service of scientific hygiene that drastically improves the quality and quantity of gradient-based optimization algorithms which have been certified as evolutionarily faithful. A wide family of engineering tools have been transformed into scientifically valid simulations of Darwinian evolution in silico. In many cases, all that was needed was the DLS noise relation; in the case of Adam, light surgery. The \textit{Darwinian Lineage Simulation} (DLS) framework developed in this paper earns its evolutionary credentials by grounding itself explicitly in \citeauthor{Fisher1930}'s (\citeyear{Fisher1930}) and \citeauthor{Wright1931}'s (\citeyear{Wright1931,Wright1932}) bitterly opposed views of evolutionary dynamics. Remarkably, we have shown that these rival views become formally equivalent when adapted to an asexual context.

Both of these achievements share a common engine: a proper understanding of genetic drift. Prior treatments either replaced it with thermal noise \citep{Whitelam2021}, left it underspecified \citep{Frank2025}, or restricted themselves to isotropic approximations \citep{Kucharavy2023}. By contrast, the DLS framework derives its model of genetic drift from evolutionary first principles. Ultimately, genetic drift comes from sampling smaller populations or individuals out of a larger distribution. By Fisher's key insight, Eq.~\eqref{EqFisherKeyInsight}, the size and shape of these distributions acts as a pre-conditioner, $V_g$, on the fitness gradient. Hence, any proper model of genetic drift must intimately connect its shape and size to how these pre-conditioners are changing from one generation to another. This line of thought leads directly to the DLS noise relation and thereby opens the door to every result in this paper.

Regarding the asexual Fisher-Wright equivalence, another structural move underpins this result. Restricting to asexual reproduction transforms Wright's sub-populations from an ecological commitment into a pure bookkeeping device. The DLS noise relation then emerges as the unique constraint ensuring that one's bookkeeping is honest: an ensemble of such lineages must faithfully recover the full population dynamics. Beyond this, however, \textit{any} valid variance schedule $\{V_g\}$ constitutes a legitimate DLS.

This bookkeeping freedom then underwrites this paper's principal technical result: the retroactive certification of evolutionary fidelity for a broad family of advanced optimization algorithms. Stochastic Gradient Descent as well as many regularizations/approximations of Newton's method and Natural Gradient Descent are all revealed to be evolutionarily faithful DLS implementations once coupled with the DLS noise relation. A notable outlier is Adam, whose inertial momentum term permits deterministic movement in the absence of selection pressure --- an evolutionary impossibility. The principled repair performed in Sec.~\ref{SecRecoverAdam} replaces its additive momentum with a rank-1 projection that amplifies selection pressure along historically successful directions. This recovers full momentum when current and historical gradients broadly align and gracefully eliminates it when they are orthogonal. As the Rosenbrock benchmark (Fig.~\ref{FigRosenbrock}) confirms, this surgery preserves Adam's capacity to navigate ill-conditioned landscapes.

This capacity matters not merely as a formal result. Consider the hypothesis that organisms evolve effective unconscious physics simulators --- a conjecture that has motivated Baldwinian neuroevolution of Physics-Informed Neural Networks (PINNs) \citep{Wong2026}. Because PINNs embed partial differential equations directly into the loss function, the resulting loss landscape is notoriously ill-conditioned and riddled with degenerate saddle points due to the high-order differential operators in the PDE residual terms \citep{Rathore2024, DeRyck2024,Kiyani2026}. For the evolutionary biologist, this represents a fitness landscape with extreme epistasis. First-order isotropic methods (like basic SGA) or even diagonal methods (like RMSProp) struggle profoundly in such stiff landscapes because the simulated dynamics cannot reliably follow the sharp fitness ridges (recall Fig.~\ref{FigRosenbrock}). However, utilizing quasi-Newton pre-conditioning (e.g., L-BFGS) has been empirically proven to reduce the magnitude of the eigenvalues and the condition number of the PINN Hessian by a factor of $1000$ or more, drastically accelerating convergence \citep{Rathore2024}. By defining $V_g$ via L-BFGS or quasi-Newton pre-conditioning and coupling it with the DLS noise relation, an evolutionary simulation is now available that brings exactly the anisotropic variance these ill-conditioned landscapes demand while preserving evolutionary fidelity. 

Here is a concrete preview of what these new DLS-certified algorithms might enable. Consider how fitness-enhancing our ability to unconsciously simulate physics is, and postulate an analogous fitness score which quantifies a neural network's ability to do so (or to quickly learn to do so).\footnote{As with all scientific analogues, this posit will live or die by its fruitfulness. Are relatively simple fitness models capable of accounting for the evolution of core innate knowledge? Who knows. But we can guess, and check, and modify/complicate and debate over our guesses until we have a working account with the requisite level of complexity.} Running a DLS simulation on this fitness landscape is then an analogical model of how we ourselves might have come to have this ability (in the distant past, long before we were even mammals). This analogical sandbox allows us to ask: How exactly did this \textit{core knowledge} develop in us? What kinds of fitness environments lead to a shallow overfitting vs a deep innate structural knowledge of physics? How is this innate structure realized in neurons? It is important to separate the scientific questions (which require scientific hygiene and careful analogical reasoning) from what the engineer might ask: How can we most effectively replicate this kind of core knowledge in AI?

Good scientific instruments require purposeful construction, especially when reasoning by analogy. By grounding the DLS framework in population genetics rather than statistical physics, every certified algorithm constitutes a genuine in silico evolutionary simulation --- one whose parameters carry biological meaning and whose limits can be systematically interrogated. Looking ahead, the DLS perspective is naturally related to other optimization methods that maintain and update Gaussian distributions over a genotype space. These include: Covariance Matrix Adaptation (CMA-ES), Natural Evolution Strategies (NES), and Estimation of Distribution Algorithms (EDAs). The non-trivial variance structure of these gradient-free methods allows them to also pass the Rosenbrock benchmark \citep{RosenbrockCMA-ES}. Establishing the precise biological interpretation of their covariance dynamics under the DLS framework is a productive direction for future work --- and a logical next step in fulfilling evolutionary computation's long-promised scientific mandate.

\section*{Acknowledgments} I am deeply grateful to my brother, Benjamin Grimmer, for teaching me so much optimization theory over the years. I also wish to thank my student, Orion Shtrezi, for their sharp feedback and insightful questions which helped refine this work.

\small

\bibliographystyle{apalike}
\bibliography{BibDirectFromDarwin}

\appendix
\section{Tractable Sampling of Genetic Drift for Adam-DLS}\label{AppendixAdamDLSNoise}
This appendix will work through some subtleties which arise when applying the DLS noise relation to Adam-DLS in order to achieve an evolutionarily faithful model of genetic drift. The algorithm described here is available at \url{https://github.com/danielgrimmer/adam-dls}. Our ultimate goal is to draw a sample, $\xi_g\sim \mathcal{N}(0, W_g)$, where $W_g = \mu^2 I - (V_{g+1} - V_g)$ and where $V_g$ is defined by Eq.~\eqref{EqAdamDLSVg}. To do so, we shall first draw from an isotropic distribution, $z \sim \mathcal{N}(0, I)$, from which our desired noise profile follows as $\xi_g=W_g^{1/2}z$. Hence, our task reduces to computing $W_g^{1/2}$ and ensuring that it is positive semi-definite. Note that our desired $W_g^{1/2}$ does not have to be the principal square root of $W_g$, it could be any valid sampling factor (such as a Cholesky factor) which ``squares'' properly as $W_g^{1/2}(W_g^{1/2})^T=W_g$. 

Importantly, given some user-specified or adaptive sequence of variances, $\{V_g\}$, it is not guaranteed that $W_g$ will be positive semi-definite. Indeed, recall from Sec.~\ref{SecDLSNoiseRel} that evolution naturally imposes a ``speed limit'' on how fast the lineage's variance can grow or change; the mutation rate $\mu^2$ might not be large enough to allow for $V_g$ to suddenly become larger, or to re-orient its direction of elongation, e.g., from the x-axis to the y-axis. In order to meet the user's demands the mutation rate $\mu^2$ would need to be higher. This motivates the following approach to handling such errors. The only way to correctly down-sample from $V_g+\mu^2I$ to the target variance $V_{g+1}$ while maintaining evolutionary fidelity is to introduce a \textit{one-off ad hoc mutation spike} to ensure $W_g\succeq0$. Along with this mutation spike we shall throw a \textit{soft error} to the user telling them that their next run of the simulation needs to have a higher mutation rate. As a first pass, however, let us assume that $\mu^2$ is large enough such that no errors occur. After we see how $W_g^{1/2}$ is computed in this case, we can then handle the potential error cases. 

To begin, let us define the scaled historical momentum vector:
\begin{equation}
y_g = \begin{cases}
\sqrt{\beta_1} \frac{D_g m_g}{\sqrt{m_g^T D_g m_g}}& , \ m_g^T D_g m_g\neq0\\
0 & , \ m_g^T D_g m_g=0
\end{cases}.
\end{equation}
By substituting the Adam-DLS variance, Eq.~\eqref{EqAdamDLSVg}, into the leading order DLS noise relation one finds, 
\begin{equation}
W_g = \underbrace{\left[ \mu^2 I - (1-\beta_1)(D_{g+1} - D_g) \right]}_{\text{Diagonal Base: } S_g} + \underbrace{y_g y_g^T}_{\text{Positive Rank-1}} - \underbrace{y_{g+1} y_{g+1}^T}_{\text{Negative Rank-1}}.
\end{equation}
This is a diagonal matrix, $S_g$, modified by a low-rank (rank-2) update with mixed signs, plus $y_g y_g^T$ minus $y_{g+1} y_{g+1}^T$. Fortunately, this form allows for us to compute $W_g^{1/2}$ efficiently as follows. We begin by factoring out the diagonal base $S_g$ from the rank-2 part of $W_g$. Concretely, let $\tilde{y}_g = S_g^{-1/2} y_g$ and $\tilde{y}_{g+1} = S_g^{-1/2} y_{g+1}$. Stack these into an $N \times 2$ matrix $U = [\tilde{y}_g, \tilde{y}_{g+1}]$. Defining a $2 \times 2$ signature matrix $C = \text{diag}(1, -1)$, we can rewrite the noise covariance as:
\begin{equation}
W_g = S_g^{1/2} (I + U C U^T) S_g^{1/2}.
\end{equation}
Hence, our task reduces to finding a square root of the inner matrix $(I + U C U^T)$.

To do so, we perform a thin QR decomposition, $U = Q R$, where $Q$ is an $N \times 2$ matrix with orthonormal columns ($Q^T Q = I$) and $R$ is a $2 \times 2$ upper triangular matrix. The inner term becomes $I + Q(R C R^T)Q^T$. Defining a $2 \times 2$ symmetric matrix, $M = R C R^T$, we then have:
\begin{equation}
W_g = S_g^{1/2} (I + Q M Q^T) S_g^{1/2}.
\end{equation}
Next, we can perform a computationally trivial eigendecomposition $M = E \Theta E^T$ where $\Theta$ is diagonal. Letting $A = Q E$ we can substitute this back into the inner term to find,
\begin{equation}
W_g = S_g^{1/2} (I + A \Theta A^T) S_g^{1/2}.
\end{equation}
Note that because $Q$ has orthonormal columns and $E$ is orthogonal, $A$ also has orthonormal columns ($A^T A = I$). 

We are now in a position to compute $W_g^{1/2}$. We seek a matrix $X$ such that $XX^T = I + A \Theta A^T$. We propose a solution of the form $X = I + A K A^T$ where $K$ is a $2 \times 2$ diagonal matrix. It follows from $A^T A = I$ that $K$ needs to satisfy $2K + K^2 = \Theta$. This yields $K = \sqrt{I + \Theta} - I$ such that,
\begin{equation}
W_g^{1/2} = S_g^{1/2} \left(I + A \left(\sqrt{I + \Theta} - I\right) A^T\right) .
\end{equation}
Applying this transformation to some $z \sim \mathcal{N}(0, I)$ as $\xi_g = W_g^{1/2} z$ yields $\xi_g \sim \mathcal{N}(0, W_g)$ as desired.

There are two potential failure modes in the above derivation: either the diagonal base, $S_g$, or the core matrix $I + \Theta$ might fail to be positive semi-definite. Either failure forbids the computation of the required matrix square roots. Fortunately, because these conditions are jointly equivalent to the requirement that $W_g \succeq 0$, we can pre-emptively calculate a single, unified bound before $S_g$ or the $\tilde{y}$ variables are ever computed. If the baseline mutation rate is too low, we introduce a one-off ad hoc mutation spike to ensure evolutionary fidelity. Recall that $W_g = \mu^2 I + (1-\beta_1)(D_{g} - D_{g+1}) + y_g y_g^T - y_{g+1} y_{g+1}^T$. By Weyl's inequality, the minimum eigenvalue of $W_g$ is bounded below by the sum of the minimum eigenvalues of its components:
\begin{align}
\lambda_{\text{min}}(W_g) \ge \mu^2 + \underbrace{\lambda_{\text{min}}((1-\beta_1)(D_g - D_{g+1}))}_{\lambda_{\text{min}}^D} +     \underbrace{\lambda_{\text{min}}(y_g y_g^T - y_{g+1} y_{g+1}^T)}_{\lambda_{\text{min}}^Y}    
\end{align}
Since both $D_g$ and $D_{g+1}$ are diagonal, $\lambda_{\text{min}}^D$ is trivial to compute. We can calculate the minimum eigenvalue of the rank-2 component exactly. Using the identity $p p^T - q q^T = \frac{1}{2}(u v^T + v u^T)$ where $u = y_g + y_{g+1}$ and $v = y_{g} - y_{g+1}$, we find that the non-zero eigenvalues are exactly $\frac{1}{2}(u^T v \pm \|u\|\|v\|)$. Since $u^T v = \|y_g\|^2 - \|y_{g+1}\|^2$, the minimum eigenvalue is:
\begin{align}
\lambda_{\text{min}}^Y = \frac{1}{2}\left( \|y_g\|^2 - \|y_{g+1}\|^2 - \|y_g + y_{g+1}\| \|y_{g+1} - y_g\| \right)    
\end{align}
Therefore, to guarantee $W_g \succeq \delta I$ for some small safety factor $\delta > 0$, it is strictly sufficient that the mutation rate satisfies:
\begin{align}
\mu^2 \ge \mu^2_\text{req}=\delta-\lambda_{\text{min}}^D - \lambda_{\text{min}}^Y    
\end{align}
If the required mutation rate, $\mu^2_\text{req}$, ever exceeds the baseline mutation rate, $\mu^2$, then a one-shot ad hoc spike in the mutation rate will be applied to cover this difference. At any such occurrence, a soft error will be triggered alerting the user that they need to increase the baseline mutation rate to preserve evolutionary fidelity.

In general, $\mu^2$ must be large enough to simultaneously cover the maximum diagonal reduction in variance and the maximum anisotropic reorientation of the lineage. Crucially, because $m_g$ tracks an exponential moving average, the term $y_{g+1} - y_g$ is typically very small in practice. The $D_{g+1} - D_g$ term should also be relatively small as it too is derived from a moving average. The baseline mutation rate $\mu^2$ should be initialized large enough to encompass this dynamic bound. The above discussion has established a mathematically rigorous link between standard machine learning hyperparameters and the natural speed limits of Darwinian evolution.

\begin{figure}[t!]
\begin{equation*}
\begin{aligned}
    &\textbf{Adam-DLS Noise Sampling}\\
    &\rule{130mm}{0.4pt} \\
    &\textbf{input}      : \alpha \text{ (lr)}, \beta_1,\beta_2\text{ (betas)}, \delta \text{ (noise safety)}, \epsilon \text{ (var. safety)}, \mu^2 \text{ (mutation rate)}\\
    &f_g \text{ (current grad.)}, m_g, m_{g+1} \text{ (first moments)}, D_g, D_{g+1} \text{ (diag. pre-conditioners)}\\[-1.ex]
    &\rule{130mm}{0.4pt} \\
    &\textbf{if } m_g^T D_g m_g = 0 \textbf{ then } y_g \leftarrow 0 \\
    &\textbf{else } y_g\leftarrow\sqrt{\beta_1} \frac{D_g m_g}{\sqrt{m_g^T D_g m_g}}\\    
    &\textbf{if } m_{g+1}^T D_{g+1} m_{g+1} = 0 \textbf{ then } y_{g+1} \leftarrow 0 \\    
    &\textbf{else } y_{g+1}\leftarrow\sqrt{\beta_1} \frac{D_{g+1} m_{g+1}}{\sqrt{m_{g+1}^T D_{g+1} m_{g+1}}}\\
    &\lambda_{\text{min}}^D \leftarrow \text{min}((1-\beta_1)(D_{g} - D_{g+1}))\\
    &\lambda_{\text{min}}^Y \leftarrow \frac{1}{2}\left( \|y_g\|^2 - \|y_{g+1}\|^2 - \|y_g + y_{g+1}\| \|y_{g+1} - y_g\| \right)\\
    &\mu^2_\text{req} \leftarrow \delta-\lambda_{\text{min}}^D - \lambda_{\text{min}}^Y \\
    &\textbf{if } \mu^2 < \mu^2_\text{req} \textbf{ then}\text{ Alert User of Soft Error:} \\
    &\hspace{7mm} \mu^2_\text{spike} \leftarrow \mu^2_\text{req}\\
    &\textbf{else}\\
    &\hspace{7mm} \mu^2_\text{spike} \leftarrow \mu^2\\
    &S_g\leftarrow\mu_\text{spike}^2 I - (1-\beta_1)(D_{g+1} - D_g)\\
    &\tilde{y}_g\leftarrow S_g^{-1/2}y_{g}\\
    &\tilde{y}_{g+1}\leftarrow S_g^{-1/2}y_{g+1}\\
    &U\leftarrow[\tilde{y}_{g},\tilde{y}_{g+1}]\\
    &Q,R\leftarrow \text{DecompQR}(U)\\
    &C\leftarrow\text{diag}(1,-1)\\
    &M\leftarrow R C R^T\\
    &\Theta,E\leftarrow \text{DecompEigen}(M)\\
    &A\leftarrow Q E\\
    &K\leftarrow\sqrt{I+\Theta}-I\\
    &z\leftarrow\text{Normal}(0,I)\\
    &z_1\leftarrow A K A^T z\\
    &\xi_g\leftarrow S_g^{1/2}(z+z_1)\\
    &\rule{130mm}{0.4pt} \\[-1.ex]
    &\textbf{return} \: \xi_g \\[-1.ex]
    &\rule{130mm}{0.4pt} \\[-1.ex]
\end{aligned}
\end{equation*}
\end{figure}

\end{document}